\newtheorem{theorem}{Theorem}
\newtheorem{proposition}{Proposition}
\newtheorem{definition}{Definition}
\newtheorem{lemma}{Lemma}
\newtheorem{remark}{Remark}
\begin{document}

\title{Bridging the Gap between Newton--Raphson Method and Regularized Policy Iteration}

\author{Zeyang Li, Chuxiong Hu, Yunan Wang, Guojian Zhan, Jie Li, Yao Lyu, Shengbo Eben Li
    \thanks{Zeyang Li is with the Laboratory for Information and Decision Systems, Massachusetts Institute of Technology, Cambridge, MA 02139, United States.}
    \thanks{Chuxiong Hu and Yunan Wang are with the Department of Mechanical Engineering, Tsinghua University, Beijing 100084, China.}
    \thanks{Guojian Zhan, Jie Li, Yao Lyu, and Shengbo Eben Li are with the College of Artificial Intelligence and the School of Vehicle and Mobility, Tsinghua University, Beijing 100084, China.}
    \thanks{Corresponding authors: Chuxiong Hu and Shengbo Eben Li.}}

\maketitle

\begin{abstract}
    Regularization is a cornerstone of modern reinforcement learning. Regularized policy iteration (RPI) provides a fundamental scheme for solving regularized Markov decision processes (RMDPs), and the widely used soft actor-critic algorithm arises as a special case when the regularizer is Shannon entropy. Despite its empirical success, the theoretical underpinnings of RPI remain unclear. In this paper, we address this gap by proving that RPI is formally equivalent to the standard Newton--Raphson method applied to the Bellman equation smoothed by strongly convex regularizers. This equivalence enables a unified convergence analysis of existing methods and supports the development of accelerated algorithms. We show that RPI enjoys local quadratic convergence; notably, for Shannon entropy, the guarantee is dimension-free. We further study RPI with inexact policy evaluation, establishing its equivalence to an inexact Newton method in which each Newton step is solved via truncated iterations, and derive an asymptotic linear convergence rate of $\gamma^{M}$, where $M$ denotes the number of operator steps used in policy evaluation. Finally, motivated by higher-order Newton schemes, we propose a new algorithm for RMDPs that achieves third-order local convergence. Numerical experiments corroborate our theory and demonstrate the practical advantages of the proposed algorithm. Overall, our results advance the theoretical understanding of regularization in reinforcement learning and suggest new directions for efficient algorithm design.
\end{abstract}

\begin{IEEEkeywords}
    Reinforcement Learning, Markov Decision Process, Regularized Policy Iteration, Newton--Raphson Method.
\end{IEEEkeywords}

\section{Introduction}

Reinforcement learning (RL) \cite{shengbo2018reinforcement} has achieved remarkable success in many fields that require sequential decision-making and optimal control, such as games \cite{silver2017mastering}, robotics \cite{hwangbo2019learning, huang2022reward}, and autonomous driving \cite{guan2021integrated, feng2023dense}. The foundation of RL algorithms lies in the theory of Markov decision process (MDP) \cite{puterman2014markov}. At each time step, the agent chooses an action based on its current state and gets an intermediate reward. The objective is to find an optimal policy that maximizes each state's value, i.e., the infinite horizon discounted accumulative rewards. The Bellman equation identifies the necessary and sufficient conditions of optimal values based on Bellman's principle of optimality.

Value iteration and policy iteration are two fundamental algorithms to solve the Bellman equation \cite{puterman2014markov}. Value iteration is essentially a fixed-point iteration technique by consecutively applying Bellman operator to the current value. Policy iteration alternately performs two steps: policy evaluation and policy improvement. Puterman and Brumelle \cite{puterman1979convergence} show that policy iteration is a particular variant of Newton--Raphson method applied to Bellman equation. Since Bellman equation contains the max operator, it is nonsmooth and cannot be directly solved by Newton--Raphson method in which its derivative is required. In policy iteration, this difficulty is circumvented by locally linearizing the Bellman equation at the current value, with the derivative computed from the resulting locally linearized equation \cite{puterman1979convergence}. Puterman and Brumelle prove that policy iteration enjoys global linear convergence with the rate being $\gamma$ (discount factor). More recently, it is shown that policy iteration achieves asymptotic quadratic convergence \cite{bertsekas2022lessons, gargiani2022dynamic} under certain nondegeneracy conditions. A widely used variant of policy iteration is the modified policy iteration algorithm, in which repeated cycles of policy improvement and inexact policy evaluation are performed \cite{puterman2014markov}, \cite{puterman1978modified}, \cite{shengbo2018reinforcement}. Modified policy iteration interpolates between value iteration and policy iteration: it reduces to value iteration when the evaluation uses a single operator step, and approaches policy iteration in the limit of infinitely many steps. Puterman and Shin prove the convergence of modified policy iteration under the assumption that the initial value is element-wise smaller than the optimal value. Scherrer et al. \cite{scherrer2012approximate} show that modified policy iteration has global linear convergence with the rate being $\gamma$ and provide comprehensive error propagation analysis for its approximate form.

Regularization techniques are frequently used in contemporary RL algorithms for various purposes, such as encouraging exploration and improving robustness. These approaches incorporate a regularizer into the optimization objective, thereby modifying the standard MDP formulation. Shannon entropy is arguably the most prominent regularizer. Haarnoja et al. \cite{haarnoja2017reinforcement} propose soft Q-learning by integrating Shannon entropy into value iteration, and subsequently develop a soft policy iteration framework that leads to the soft actor-critic algorithm \cite{haarnoja2018soft}. Duan et al. \cite{duan2021distributional} combine distributional value learning with soft policy iteration to obtain the distributional soft actor-critic algorithm. Srivastava et al. \cite{srivastava2021parameterized} use Shannon entropy to quantify exploration and derive stochastic policies that maximize this measure while ensuring a small expected cost. Tsallis entropy is another notable regularizer. Lee et al. \cite{lee2018sparse} construct a sparse MDP with Tsallis entropy regularization and propose a sparse value iteration algorithm. Chow et al. \cite{chow2018path} employ Tsallis entropy in path consistency learning and derive a sparse consistency equation, while Lee et al. \cite{lee2020generalized} develop Tsallis actor-critic by incorporating Tsallis entropy into policy iteration. While these algorithms share a common reliance on regularization, they arise from disparate motivations and often rely on ad hoc analyses. Geist et al. \cite{geist2019theory} unify a broad class of regularizers within the framework of regularized MDPs. The key idea is to define a regularized Bellman operator by augmenting the standard Bellman operator with a strongly convex function (the regularizer) via the Legendre--Fenchel transform \cite{nesterov2005smooth, mensch2018differentiable}. Since regularized Bellman operator shares the same properties with its unregularized counterparts, such as contraction and monotonicity \cite{geist2019theory}, the dynamic programming techniques in standard MDPs (i.e, policy iteration and modified policy iteration) can be utilized to solve regularized MDPs.

Despite the development of the regularized MDP framework, convergence guarantees for the corresponding algorithms remain relatively limited. Geist et al. \cite{geist2019theory} establish global linear convergence of regularized policy iteration with rate $\gamma$ by exploiting the monotone contraction property of the regularized Bellman operators. More recently, Cen et al. \cite{cen2022fast} show that soft policy iteration (a special case of regularized policy iteration when choosing Shannon entropy as the regularizer) can achieve asymptotic quadratic convergence. Their result, however, relies on a nondegeneracy assumption on the optimal state distribution and yields convergence bounds that depend on the dimension of the state space.

Motivated by analyses in the unregularized setting  \cite{puterman1979convergence, bertsekas1997nonlinear, gargiani2022dynamic}, this paper establishes a formal equivalence between regularized policy iteration and the standard Newton--Raphson method. This equivalence not only facilitates a unified analysis of existing methods for regularized MDPs but also inspires the design of novel algorithms with accelerated convergence. A visual overview of the main results is provided in Fig.~\ref{main results}. The main contributions of this paper are summarized as follows.
\begin{itemize}
    \item This paper proves that regularized policy iteration is strictly equivalent to the standard Newton--Raphson method applied to the Bellman equation smoothed by strongly convex functions. To the best of our knowledge, this is the first result to reveal this formal connection. The key idea is that smoothed Bellman equation can be converted into an equivalent affine transformation form, in which the Jacobian serves as the linear map. This enables the Newton iteration formula to be simplified into a regularized self-consistency equation, which corresponds to the policy evaluation part of regularized policy iteration.
    
    \item This paper proves that regularized policy iteration converges quadratically in a local region around the optimal value. The key to the proof is to exploit the global Lipschitz continuity of the Jacobian of the smoothed Bellman equation and to bound the difference terms in the Newton iteration formula. This result sheds light on the role of regularization in enabling fast convergence. To the best of our knowledge, this is the first quadratic convergence result for regularized policy iteration with general strongly convex functions. When the regularizer is Shannon entropy, the convergence guarantee is dimension-free and requires no nondegeneracy assumptions, representing a substantial improvement over existing results for soft policy iteration \cite{cen2022fast}.

    \item We further extend the analysis from regularized policy iteration to the case with inexact policy evaluation. This variant, known as regularized modified policy iteration, has been discussed by Scherrer et al. \cite{scherrer2012approximate} and Geist et al. \cite{geist2019theory}. We show that regularized modified policy iteration is equivalent to an inexact Newton method in which each Newton step is solved via truncated iterations. We prove that the asymptotic convergence rate of regularized modified policy iteration is $\gamma^M$, where $M$ denotes the number of operator steps used in policy evaluation. The result follows by showing that the evaluation error decays at rate $\gamma^M$ with respect to a norm naturally tied to the optimal value.
    
    \item We develop a new algorithm for regularized MDPs inspired by higher-order Newton schemes \cite{shamanskii1967modification, petkovic2014multipoint}, which achieves third-order local convergence. We demonstrate that this third-order method outperforms regularized policy iteration in wall-clock time in numerical experiments, highlighting the practical benefits of the proposed algorithm and further justifying the algorithmic value of the Newton--Raphson perspective on regularized policy iteration.
\end{itemize}

\begin{figure*}[ht]
    \centering
    \includegraphics[width=5.6in]{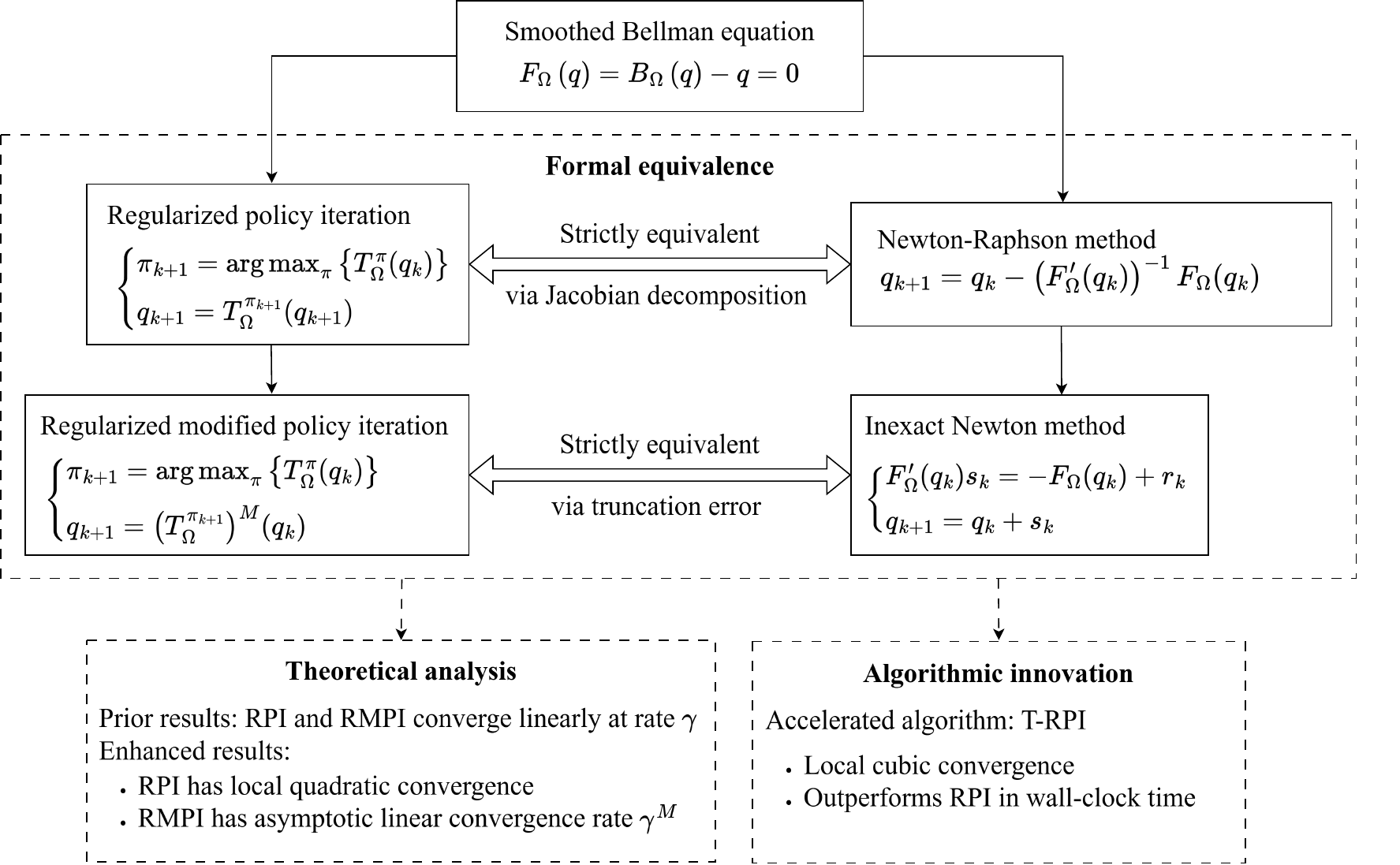}
    \caption{A visual overview of the main results in this paper. We prove that regularized policy iteration (RPI) is formally equivalent to Newton--Raphson method, and that regularized modified policy iteration (RMPI) corresponds to the inexact Newton method. These equivalences provide a unified framework for analyzing the convergence of existing algorithms, yield stronger results, and motivate new methods with accelerated convergence.}
    \label{main results}
\end{figure*}

\section{Preliminaries}

\subsection{Regularized Markov Decision Processes}
A standard MDP can be represented by a 5-tuple $\{\mathcal{S},\mathcal{A},P,r,\gamma \}$, where $\mathcal{S}$ represents the finite state space, $\mathcal{A}$ represents the finite action space, $\gamma$ denotes the discount factor, $P$ represents the Markov transition kernel and $r$ represents the reward function. A policy $\pi: \mathcal{S} \rightarrow \Delta_{|\mathcal{A}|}$ maps a state to a probability distribution over action space.

For a given policy $\pi$, its state-value function $v_\pi: \mathcal{S} \rightarrow \mathbb{R}$ is defined as the expectation of discounted cumulative reward starting with state $s$. The action-value function $q_\pi: \mathcal{S} \times \mathcal{A} \rightarrow \mathbb{R}$ of a policy $\pi$ is defined as the expectation of discounted cumulative reward starting with state $s$ and action $a$. Value functions naturally hold a recursive structure due to the Markovian property and the infinite horizon setting, which is referred to as the self-consistency conditions. Based on Bellman’s principle of optimality, the Bellman equation denotes the necessary conditions for optimal values. See \cite{shengbo2018reinforcement} for more details. In this paper, we mainly discuss the equations and algorithms for action-value function $q(s,a)$ for simplicity of expression. Similar results can be easily derived for state-value function $v(s)$.

In the tabular setting (finite state and action spaces), the action-value for all state-action pairs can be viewed as vectors in Euclidean spaces, i.e., $q \in \mathbb{R}^{|\mathcal{S}| \cdot |\mathcal{A}|}$. The self-consistency condition and Bellman equation can be viewed as systems of multivariable equations. Two operators are defined to further simplify the notation. The self-consistency operator $T^{\pi}$ is defined as the right-hand side of self-consistency condition:
\begin{equation}
    \label{self-consistency operator}
    \begin{aligned}
        &[T^{\pi}(q)](s, a)=r(s, a)\\&+\gamma \sum_{s^{\prime} \in \mathcal{S}} p\left(s^{\prime} | s, a\right) \sum_{a^{\prime} \in \mathcal{A}} \pi\left(a^{\prime} | s^{\prime}\right) q\left(s^{\prime}, a^{\prime}\right),
    \end{aligned}
\end{equation}
in which $p\left(s^{\prime} | s, a\right)$ denotes the state transition probability following the Markov transition kernel $P$. Note that $T^{\pi}$ is defined with a particular policy $\pi$. The Bellman operator $B$ is defined as the right-hand side of Bellman equation:
\begin{equation}
    \label{original Bellman equation}
    [B\left( q \right)](s,a) =\max_{\pi} \left\{ [T^{\pi}(q)](s,a) \right\}.
\end{equation}

In the modified policy iteration (MPI) algorithm, policy improvement and inexact policy evaluation are alternately performed \cite{puterman2014markov}, \cite{scherrer2012approximate}:
\begin{equation}
    \label{M-PI}
    \left\{\begin{array}{l}
        \pi_{k+1}=\underset{\pi}{\mathrm{argmax}}\left\{ T^{\pi}\left( q_k\right) \right\} \\
        q_{k+1}=\left(T^{\pi_{k+1}}\right)^M (q_k)
    \end{array}\right. \quad.
\end{equation}
MPI reduces to value iteration as $M=1$ and becomes policy iteration as $M=\infty$.

Geist et al. propose the framework of regularized Markov decision processes (RMDPs), in which the core idea is to regularize the original operators \cite{geist2019theory}. Let $\Omega: \Delta_{|\mathcal{A}|} \rightarrow \mathbb{R}$ be a strongly convex function. The regularized self-consistency operator is defined as:
\begin{equation}
    \label{regularized self-consistency operator}
    \begin{aligned}
        &[T^{\pi}_{\Omega}(q)](s, a)=r(s, a)\\
        &+\gamma \sum_{s^{\prime} \in \mathcal{S}} p\left(s^{\prime} | s, a\right) \left(\sum_{a^{\prime} \in \mathcal{A}} \pi\left(a^{\prime} | s^{\prime}\right) q\left(s^{\prime}, a^{\prime}\right) - \lambda \Omega(\pi(\cdot|s^{\prime}))\right).
    \end{aligned}
\end{equation}
in which $\pi(\cdot | s) = [\pi(a_1 | s), \pi(a_2 | s),\cdots,\pi(a_{|\mathcal{A}|} | s)]^\top$, and $\lambda$ is the regularization coefficient. The regularized Bellman operator is defined as:
\begin{equation}
    \label{regularized Bellman operator}
    [B_{\Omega}\left( q \right)](s,a) =\max_{\pi}\left\{ [T^{\pi}_{\Omega}(q)](s,a) \right\}.
\end{equation}

Following the idea of MPI in standard MDPs, regularized modified policy iteration (RMPI) is proposed to solve RMDPs \cite{geist2019theory}:
\begin{equation}
    \label{MR-PI}
    \left\{\begin{array}{l}
        \pi_{k+1}=\underset{\pi}{\mathrm{argmax}}\left\{ T^{\pi}_{\Omega}\left( q_k \right) \right\} \\
        q_{k+1}=\left(T^{\pi_{k+1}}_{\Omega}\right)^M (q_k)
    \end{array}\right. \quad.
\end{equation}
RMPI reduces to regularized value iteration (RVI) as $M=1$ and becomes regularized policy iteration (RPI) as $M=\infty$.

\subsection{Newton--Raphson Method}
In numerical analysis, the Newton--Raphson (NR) method is a widely-used algorithm for solving nonlinear equations. Consider a system of nonlinear equations, $F(x) = 0$, where $F: \mathbb{R}^n \rightarrow \mathbb{R}^n$ is a vector-valued function. Given $x_0 \in \mathbb{R}^n$, NR method generates a sequence $\left\{ x_k \right\}$ converging to a solution of $F(x) = 0$:
\begin{equation}
    \label{Newton Iteration}
    x_{k+1} = x_{k}- {F^{\prime}(x_k)}^{-1} F(x_k),
\end{equation}
in which $F^{\prime}(x_k)$ denotes the Jacobian of $F(x)$ at $x_k$ \cite{ortega2000iterative}.

At each iteration step, a system of linear equations (\ref{Newton Iteration}) is solved. In practical implementations of NR method, it is solved approximately, which naturally gives rise to the class of inexact Newton methods:
\begin{equation}
    \nonumber
    \left\{ \begin{array}{l}
        F^{\prime}(x_k)s_k=-F(x_k)+r_k\\
        x_{k+1}=x_k+s_k\\
    \end{array} \right. \quad,
\end{equation}
in which $r_k$ denotes the error of solving the Newton iteration formula (\ref{Newton Iteration}) \cite{dembo1982inexact}.

\section{Smoothed Bellman Equation}

Bellman equation identifies the necessary and sufficient conditions of optimal values based on Bellman's principle of optimality. The nonsmooth nature of the max operator makes the Bellman equation tricky to solve. Nesterov originally proposes the idea of smoothing the nonsmooth function in optimization \cite{nesterov2005smooth}, making use of Legendre--Fenchel transform (i.e., the convex conjugate) \cite{boyd2004convex}. Following Nesterov's method, smoothed max operator can be obtained by adding a strongly convex function into the formulation of max operator. On the other hand, in the RL community, researchers have proposed various algorithms by adding different regularizers into Bellman equation (see Introduction for detailed enumeration). Geist et al. \cite{geist2019theory} identify the connections between regularization and Nesterov's smoothed max operator and propose the framework of regularized MDPs, showing regularization in RL is essentially a smoothing technique applied on the max operator inside Bellman equation.

Despite the discovery that regularization is a smoothing technique and induces the smoothed Bellman equation, the connections between the iterative algorithms and the equation-solving techniques remain unclear. This paper shows for the first time that RPI is strictly equivalent to the standard NR method applied to smoothed Bellman equation. In this section, we prove some important properties of smoothed Bellman equation, which is crucial for establishing the connections between RPI and NR method. We begin with defining smoothed max operator and stating its properties.

\begin{definition}[smoothed max operator \cite{nesterov2005smooth}, \cite{mensch2018differentiable}]
    \label{smoothed max operator}
    Given $x\in \mathbb{R} ^m $ and a strongly convex function $\Omega: \mathbb{R} ^m \rightarrow \mathbb{R}$, the smoothed max operator induced by $\Omega$ is defined as:
    \begin{equation}
        {\max} _{\Omega}(x)\triangleq \max_{p\in \Delta_m} \left\{ \langle p,x\rangle -\lambda\Omega (p) \right\} ,
        \label{definition of smoothed max operator}
    \end{equation}
    in which $\Delta_m$ denotes the probability simplex, i.e., $\Delta_m=\left\{ p\in \mathbb{R} ^m \mid \sum_{i=1}^m p_i=1, p\geq 0 \right\}$ and the regularization parameter $\lambda>0$ controls the degree of smoothing. $\langle \cdot ,\cdot \rangle$ denotes the inner product between two vectors.
\end{definition}

\begin{lemma}[properties of smoothed max operator]
    \hfill
    \begin{enumerate}
        
        \item Suppose $\Omega$ is $\mu$-strongly convex with respect to a norm $\|\cdot\|$, whose dual norm is denoted as $\|\cdot\|_*$. The gradient of smoothed max operator $\nabla {\max} _{\Omega}(x)$ satisfies
            \begin{equation}
                \label{Lipschitz continuous gradient of smoothed max operator}
                \Vert\nabla {\max} _{\Omega}(x)-\nabla {\max} _{\Omega}(y)\Vert \leq \frac{1}{\lambda\mu}\Vert x-y \Vert_*.
            \end{equation}
        
        \item The gradient of smoothed max operator is given by
            \begin{equation}
                \label{gradient of smoothed max operator}
                \nabla {\max} _{\Omega}(x)=\underset{p\in \Delta_m}{\mathrm{argmax}}\left\{ \langle p,x\rangle -\lambda\Omega (p) \right\}.
            \end{equation}
            Besides, ${[\nabla {\max} _{\Omega}(x)]}_i \geq 0$ and $ \sum_{i=1}^m {[\nabla {\max} _{\Omega}(x)]}_i =1 $ hold.

        \item The smoothed max operator can be decomposed with its gradient, i.e.,
            \begin{equation}
                \label{decomposition of smoothed max operator}
                {\max} _{\Omega}(x) = \langle \nabla {\max} _{\Omega}(x), x \rangle - \lambda\Omega(\nabla {\max} _{\Omega}(x)).
            \end{equation}
    \end{enumerate}
\end{lemma}

\begin{proof}
    \hfill
    \begin{enumerate}
        
        \item Note that by definition, smoothed max operator ${\max} _{\Omega}(x)$ is the conjugate of $\lambda\Omega(p)$. \eqref{Lipschitz continuous gradient of smoothed max operator} is a standard result for conjugate functions in convex analysis \cite{borwein2006convex}.
        
        \item Define $p^{*}(x)=\underset{p\in \Delta_m}{\mathrm{argmax}}\left\{ \langle p,x\rangle -\lambda\Omega (p) \right\}$ for any $x\in \mathbb{R}^m$. For any $x,y\in \mathbb{R}^m$, we have ${\max} _{\Omega}(y) \geq \langle p^{*}(x),y\rangle-\lambda\Omega (p^{*}(x))=\langle p^{*}(x),x\rangle-\lambda\Omega (p^{*}(x))+\langle p^{*}(x),y-x\rangle={\max} _{\Omega}(x)+\langle p^{*}(x),y-x\rangle$, so $p^{*}(x)$ is a subgradient of ${\max} _{\Omega}(x)$. Since ${\max} _{\Omega}(x)$ is the conjugate of strongly convex function $\lambda\Omega(p)$, it is differentiable. For a differentiable function, its subgradient is unique and equals the gradient.
        
        \item Since the maximum in (\ref{definition of smoothed max operator}) is the gradient $\nabla {\max} _{\Omega}(x)$, (\ref{decomposition of smoothed max operator}) is a direct result by substituting the maximum back in (\ref{definition of smoothed max operator}).
    \end{enumerate}
\end{proof}

\begin{remark}[examples for smoothed max operators]
    \label{examples of regularization-based smoothed max operators}
    If the regularizer is chosen as (minus) Shannon entropy, i.e., $\Omega (p)=\sum_i{p_i\log p_i}$, the corresponding smoothed max operator has a closed-form expression (log-sum-exp, see \cite{boyd2004convex}): ${\max} _{\Omega}(x)=\lambda\log \left( \sum_i{e^{\frac{1}{\lambda} x_i}}\right)$. Its gradient is the softmax function: ${[\nabla {\max} _{\Omega}(x)]}_i=e^{\frac{1}{\lambda} x_i}/\left({\sum_i{e^{\frac{1}{\lambda} x_i}}}\right)$. Shannon entropy is utilized in soft policy iteration and soft actor-critic \cite{haarnoja2018soft}, which is a special case of regularized policy iteration. Since Shannon entropy is $1$-strongly convex with respect to $\Vert \cdot \Vert_1$ over the probability simplex, we have $\Vert\nabla {\max} _{\Omega}(x)-\nabla {\max} _{\Omega}(y)\Vert_{1} \leq \frac{1}{\lambda}\Vert x-y \Vert_{\infty}$.

    If the regularizer is chosen as (minus) second-order Tsallis entropy, i.e., $\Omega (p)=\frac{1}{2}(\sum_i{p_i^2}-1)$, the corresponding smoothed max operator also has a (rather complicated) closed-form expression, whose gradient is the sparsemax function (see \cite{lee2018sparse, chow2018path, lee2020generalized} for more details). This choice is $1$-strongly convex with respect to $\Vert \cdot \Vert_2$ and $\Vert\nabla {\max} _{\Omega}(x)-\nabla {\max} _{\Omega}(y)\Vert_{2} \leq \frac{1}{\lambda}\Vert x-y \Vert_{2}$ holds.
\end{remark}

By replacing the max operators inside Bellman equation with smoothed max operators, we obtain a smooth approximation of Bellman equation. We formally define the smoothed Bellman equation in matrix form as follows.

\begin{definition}[smoothed Bellman equation]
    Suppose we have $|\mathcal{S}|=n$ and $|\mathcal{A}|=m$. The action-value $q$ denotes a vector in $\mathbb{R}^{nm}$ containing all the values for each state-action pair $(s,a)$. Smoothed Bellman equation is a multivariate nonlinear equation on $\mathbb{R}^{nm}$:
    \begin{equation}
        \label{smoothed Bellman equation}
        F_{\Omega}\left( q \right) =B_{\Omega}\left( q \right) -q = \gamma P f_{\Omega}(q) +r - q =0.
    \end{equation}
    The vector $r\in\mathbb{R}^{nm}$ denotes all the rewards for each state-action pair $(s,a)$. The matrix $P\in \mathbb{R}^{nm\times n}$ is determined by the Markov transition kernel:
    \begin{equation}
        P=\left(\begin{array}{c}
        P\left(s_1\right) \\
        P\left(s_2\right) \\
        \vdots \\
        P\left(s_n\right)
        \end{array}\right),
        \nonumber
    \end{equation}
    in which $P\left( s_k \right) \in \mathbb{R}^{m\times n}$ denotes all the transition probabilities starting from $s_k$:
    \begin{equation}
        P\left( s_k \right) =\left( \begin{matrix}
            p\left( s_1|s_k,a_1 \right)&		\cdots&		p\left( s_n|s_k,a_1 \right)\\
            p\left( s_1|s_k,a_2 \right)&		\cdots&		p\left( s_n|s_k,a_2 \right)\\
            \vdots&		\vdots&		\vdots\\
            p\left( s_1|s_k,a_m \right)&		\cdots&		p\left( s_n|s_k,a_m \right)\\
        \end{matrix} \right).
        \nonumber
    \end{equation}
    $f_{\Omega}(q)$ is given by
    \begin{equation}
        f_{\Omega}(q) =\left( \begin{array}{c}
                \max_{\Omega} \left\{ q(s_1,\cdot) \right\}\\
                \max_{\Omega} \left\{ q(s_2,\cdot) \right\}\\
                \vdots\\
                \max_{\Omega} \left\{ q(s_n,\cdot) \right\}\\
        \end{array} \right).
        \nonumber
    \end{equation}
\end{definition}

The Jacobian of smoothed Bellman equation (\ref{smoothed Bellman equation}) can be calculated directly as
\begin{equation}
    F^{\prime}_{\Omega}(q)=\gamma P \nabla f_{\Omega}(q) - I,
    \label{Jacobian of smoothed Bellman equation}
\end{equation}
where $I$ denotes the identity matrix of size $n m$ and $\nabla f_{\Omega}(q) \in \mathbb{R}^{n \times nm}$ is a block diagonal matrix given by
\begin{equation}
    \label{gradient of f_Omega}
    \begin{aligned}
        &\nabla f_{\Omega}(q) \\
        =&\left( \begin{matrix}
            (\nabla \max_{\Omega} \left\{ q(s_1,\cdot ) \right\} )^\top&		&		\\
            &		\ddots&		\\
            &		&		(\nabla \max_{\Omega} \left\{ q(s_n,\cdot ) \right\} )^\top\\
        \end{matrix} \right).
    \end{aligned}
\end{equation}

We state the properties of Jacobian (\ref{Jacobian of smoothed Bellman equation}) in the following propositions. Proposition \ref{properties of the Jacobian's inverse} shows that the inverse of Jacobian always exists and is negative and bounded. Proposition \ref{Lipschitz continuity of the Jacobian} shows that the Jacobian is Lipschitz continuous on $\mathbb{R}^{nm}$. Proposition \ref{proposition on decomposition with Jacobian} shows that smoothed Bellman equation can be decomposed with its Jacobian, i.e., it has an equivalent affine transformation form in which the Jacobian serves as the linear map.

\begin{proposition}[properties of the Jacobian's inverse]
    \label{properties of the Jacobian's inverse}
    The inverse of Jacobian of smoothed Bellman equation (\ref{Jacobian of smoothed Bellman equation}) always exists and is negative. Furthermore, the infinity norm of the inverse of Jacobian is bounded by
    \begin{equation}
        {\Vert F_{\Omega}^{\prime}(q)^{-1} \Vert}_{\infty} \leq \frac{1}{1-\gamma}.
        \nonumber
    \end{equation}
\end{proposition}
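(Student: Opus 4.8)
The plan is to rewrite the Jacobian as $F_{\Omega}^{\prime}(q) = -(I-\gamma A)$ with $A \triangleq P\,\nabla f_{\Omega}(q)\in\mathbb{R}^{nm\times nm}$, and to show that $A$ is row‑stochastic, i.e.\ entrywise nonnegative with every row summing to one. Nonnegativity is immediate: the entries of $P$ are transition probabilities, and by part (4) of the lemma on the smoothed max operator every block $(\nabla\max_{\Omega}\{q(s_i,\cdot)\})^{\mathrm{T}}$ of the block‑diagonal matrix $\nabla f_{\Omega}(q)$ has nonnegative entries, so $A$ is a product of nonnegative matrices. For the row sums, note that $P\boldsymbol{1}_{n}=\boldsymbol{1}_{nm}$ (the $s'$‑probabilities out of any pair $(s,a)$ sum to one) and that $\nabla f_{\Omega}(q)\,\boldsymbol{1}_{nm}=\boldsymbol{1}_{n}$ (again by part (4), $\nabla\max_{\Omega}$ lies in the simplex), hence $A\boldsymbol{1}_{nm}=P\bigl(\nabla f_{\Omega}(q)\boldsymbol{1}_{nm}\bigr)=P\boldsymbol{1}_{n}=\boldsymbol{1}_{nm}$. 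Consequently $\gamma A$ is entrywise nonnegative with $\Vert \gamma A\Vert_{\infty}=\gamma<1$.

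Next I would establish invertibility and negativity simultaneously via the Neumann series. Since $\Vert\gamma A\Vert_{\infty}=\gamma<1$, the series $\sum_{k\ge 0}(\gamma A)^{k}$ converges to $(I-\gamma A)^{-1}$, so $F_{\Omega}^{\prime}(q)^{-1}=-(I-\gamma A)^{-1}$ exists. Each term $\gamma^{k}A^{k}$ is entrywise nonnegative (product and powers of nonnegative matrices), so $(I-\gamma A)^{-1}=\sum_{k\ge 0}\gamma^{k}A^{k}$ is entrywise nonnegative — in fact the $k=0$ term forces every diagonal entry to be $\ge 1$ — and therefore $F_{\Omega}^{\prime}(q)^{-1}=-(I-\gamma A)^{-1}$ has all entries $\le 0$, i.e.\ it is negative.

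Finally, for the norm bound I would use that $(I-\gamma A)^{-1}$ is nonnegative, so its induced $\infty$‑norm equals the largest component of the vector $(I-\gamma A)^{-1}\boldsymbol{1}_{nm}$. From $(I-\gamma A)\boldsymbol{1}_{nm}=(1-\gamma)\boldsymbol{1}_{nm}$ we obtain $(I-\gamma A)^{-1}\boldsymbol{1}_{nm}=\tfrac{1}{1-\gamma}\boldsymbol{1}_{nm}$, whence $\Vert F_{\Omega}^{\prime}(q)^{-1}\Vert_{\infty}=\Vert(I-\gamma A)^{-1}\Vert_{\infty}=\tfrac{1}{1-\gamma}$ (negation does not change the norm). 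Alternatively, one can simply estimate the series termwise: $\Vert(I-\gamma A)^{-1}\Vert_{\infty}\le\sum_{k\ge 0}\gamma^{k}\Vert A\Vert_{\infty}^{k}=\sum_{k\ge 0}\gamma^{k}=\tfrac{1}{1-\gamma}$, using submultiplicativity of the induced norm and $\Vert A\Vert_{\infty}=1$.

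There is no genuinely hard step here: the whole argument is a textbook Neumann‑series estimate. The only point that needs care is the bookkeeping in the first paragraph — verifying that the block‑diagonal $\nabla f_{\Omega}(q)$ is row‑stochastic with the stated dimensions, which reduces to invoking part (4) of the lemma correctly. If a probabilistic reading is preferred, one may observe that $\gamma A$ is exactly the discounted one‑step transition matrix on state–action pairs induced by the ``soft greedy'' policy $\nabla f_{\Omega}(q)$, so that $-F_{\Omega}^{\prime}(q)=I-\gamma A$ is the familiar $(I-\gamma P^{\pi})$ operator, whose inverse is well known to be nonnegative with $\infty$‑norm $\tfrac{1}{1-\gamma}$.
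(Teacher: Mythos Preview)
Your proof is correct and follows essentially the same route as the paper: establish that $A=P\nabla f_{\Omega}(q)$ has $\|A\|_{\infty}=1$ with nonnegative entries, invert $I-\gamma A$ via the Neumann series, and read off both the sign and the $\tfrac{1}{1-\gamma}$ bound. Your argument is in fact slightly sharper (you verify row-stochasticity explicitly and obtain the norm as an equality), but the core idea is identical.
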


\begin{proof}
    Note that ${\Vert P \Vert}_{\infty}=1$ holds. Using the properties of smoothed max operator, we also have ${\Vert \nabla f_{\Omega}(q) \Vert}_{\infty}=1$. Each row of $ \nabla f_{\Omega}(q)$ sums to $1$, so ${\Vert P \nabla f_{\Omega}(q) \Vert}_{\infty}=1$ holds. Since ${\Vert \gamma P \nabla f_{\Omega}(q) \Vert}_{\infty}=\gamma < 1$, using the Neumann series of matrices, the inverse of the Jacobian $F_{\Omega}^{\prime}(q)$ is given by
    \begin{equation}
        \label{explicit formula for inverse of the Jacobian}
        \begin{aligned}
        F_{\Omega}^{\prime}(q)^{-1} &= (\gamma P \nabla f_{\Omega}(q) - I)^{-1}\\
        & = - \sum\limits_{i=0}^{\infty} {(\gamma P \nabla f_{\Omega}(q))^i}.
        \end{aligned}
    \end{equation}
    Since $\gamma P \nabla f_{\Omega}(q) > 0$, we have $F_{\Omega}^{\prime}(q)^{-1} < -I $. Taking the infinity norm of $F_{\Omega}^{\prime}(q)^{-1}$, we obtain
    \begin{equation}
        \begin{aligned}
            \begin{aligned}
                \left\|F_{\Omega}^{\prime}(q)^{-1}\right\|_{\infty} \leq \sum_{i=0}^{\infty}\left(\left\|\left(\gamma P \nabla f_{\Omega}(q)\right)\right\|_{\infty}\right)^i  \leq \sum_{i=0}^{\infty} \gamma^i=\frac{1}{1-\gamma} .
            \end{aligned}
        \end{aligned}
        \nonumber
    \end{equation}
\end{proof}

\begin{proposition}[Lipschitz continuity of the Jacobian]
    \label{Lipschitz continuity of the Jacobian} Assume the regularizer $\Omega$ is $\mu$-strongly convex with respect to $\Vert \cdot \Vert_1$.
    The Jacobian of smoothed Bellman equation (\ref{Jacobian of smoothed Bellman equation}) is Lipschitz continuous with respect to $\Vert \cdot \Vert_{\infty}$, i.e.,
    \begin{equation}
        \label{expression of Lipschitz continuity of the Jacobian}
        \Vert F^{\prime}_{\Omega}(q_1)-F^{\prime}_{\Omega}(q_2)\Vert _{\infty}\leq \frac{\gamma}{\lambda\mu}\Vert q_1-q_2\Vert _{\infty},
    \end{equation}
    holds for all $q_1,q_2 \in \mathbb{R}^{nm}$.
\end{proposition}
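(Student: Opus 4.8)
The plan is to reduce everything to the Lipschitz behaviour of the smoothed max operator already established in the lemma on the smoothed max operator. Since $F^{\prime}_{\Omega}(q)=\gamma P\nabla f_{\Omega}(q)-I$ by (\ref{Jacobian of smoothed Bellman equation}), the constant term $-I$ cancels in the difference, so $F^{\prime}_{\Omega}(q_1)-F^{\prime}_{\Omega}(q_2)=\gamma P(\nabla f_{\Omega}(q_1)-\nabla f_{\Omega}(q_2))$. First I would apply submultiplicativity of the induced infinity norm together with $\Vert P\Vert_{\infty}=1$ (shown in the proof of Proposition \ref{properties of the Jacobian's inverse}) to obtain $\Vert F^{\prime}_{\Omega}(q_1)-F^{\prime}_{\Omega}(q_2)\Vert_{\infty}\le\gamma\,\Vert\nabla f_{\Omega}(q_1)-\nabla f_{\Omega}(q_2)\Vert_{\infty}$, which isolates the only nonlinear piece, namely the block-diagonal matrix $\nabla f_{\Omega}$.

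Next I would use the block-diagonal structure of $\nabla f_{\Omega}$ in (\ref{gradient of f_Omega}). The induced infinity norm of a matrix is its largest absolute row sum, and row $i$ of $\nabla f_{\Omega}(q_1)-\nabla f_{\Omega}(q_2)$ is supported on the $i$-th block and equals $(\nabla\max_{\Omega}\{q_1(s_i,\cdot)\}-\nabla\max_{\Omega}\{q_2(s_i,\cdot)\})^{\mathrm{T}}$. Hence $\Vert\nabla f_{\Omega}(q_1)-\nabla f_{\Omega}(q_2)\Vert_{\infty}=\max_{1\le i\le n}\Vert\nabla\max_{\Omega}\{q_1(s_i,\cdot)\}-\nabla\max_{\Omega}\{q_2(s_i,\cdot)\}\Vert_{1}$, so the matrix bound is reduced to a per-state estimate for the gradient of the smoothed max operator.

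The final step is purely about the smoothed max operator and uses only norm-equivalence inequalities together with (\ref{Lipschitz continuous gradient of smoothed max operator for square norm}). For each $i$ I would pass from the $\ell_1$ norm of the gradient difference to its $\ell_2$ norm up to a $\sqrt{\cdot}$ dimension factor, then invoke the Euclidean Lipschitz estimate $\Vert\nabla\max_{\Omega}(\boldsymbol x)-\nabla\max_{\Omega}(\boldsymbol y)\Vert_{2}\le\frac{N}{\mu}\Vert\boldsymbol x-\boldsymbol y\Vert_{2}$, and finally pass from the Euclidean norm of $q_1(s_i,\cdot)-q_2(s_i,\cdot)$ to $\Vert q_1-q_2\Vert_{\infty}$ up to another $\sqrt{\cdot}$ factor. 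These norm-equivalence steps are exactly what introduces the polynomial dimension constant $\sqrt{nm}$ in (\ref{expression of Lipschitz continuity of the Jacobian}); since the resulting bound no longer depends on $i$, taking the maximum over states is harmless, and multiplying by $\gamma$ and chaining the earlier inequalities yields (\ref{expression of Lipschitz continuity of the Jacobian}).

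There is no deep analytic obstacle: the heavy lifting has already been done in the lemma on the smoothed max operator, which supplies the $\frac{N}{\mu}$-Lipschitz gradient, and (\ref{smoothed Bellman equation}) together with (\ref{Jacobian of smoothed Bellman equation}) makes the affine-plus-smoothed-max structure explicit. The only thing that requires care is the bookkeeping of the three norm conversions ($\ell_1\leftrightarrow\ell_2\leftrightarrow\ell_\infty$) and the correct use of the block structure of $\nabla f_{\Omega}$, so that the coordinatewise Lipschitz behaviour of $\nabla\max_{\Omega}$ aggregates into the claimed induced-norm estimate with the stated constant.
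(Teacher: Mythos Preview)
Your plan matches the paper's argument almost exactly: both reduce to $\gamma\,\Vert\nabla f_{\Omega}(q_1)-\nabla f_{\Omega}(q_2)\Vert_{\infty}$ via $\Vert P\Vert_{\infty}=1$, exploit the block-diagonal structure of $\nabla f_{\Omega}$, and feed in the Euclidean Lipschitz estimate (\ref{Lipschitz continuous gradient of smoothed max operator for square norm}) together with norm-equivalence inequalities. The one point to watch is the constant: your two per-state conversions ($\ell_1\to\ell_2$ on the gradient difference in $\mathbb{R}^m$ and $\ell_2\to\ell_\infty$ on $q_1(s_i,\cdot)-q_2(s_i,\cdot)\in\mathbb{R}^m$) each cost a factor $\sqrt{m}$, so your chain actually produces $\gamma\frac{N}{\mu}m$, not $\gamma\frac{N}{\mu}\sqrt{nm}$ as stated in (\ref{expression of Lipschitz continuity of the Jacobian}).

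The paper's route to the specific constant $\sqrt{nm}$ is slightly different: it first bounds the \emph{spectral} norm of the block-diagonal difference, using $\bigl\Vert\mathrm{diag}(A_1,\dots,A_n)\bigr\Vert_2=\max_i\Vert A_i\Vert_2$ to get $\Vert\nabla f_{\Omega}(q_1)-\nabla f_{\Omega}(q_2)\Vert_2\le\frac{N}{\mu}\Vert q_1-q_2\Vert_2$, then converts $\Vert q_1-q_2\Vert_2\le\sqrt{nm}\,\Vert q_1-q_2\Vert_\infty$ on the full vector in $\mathbb{R}^{nm}$ (this is where $\sqrt{nm}$ enters), and only at the end passes from the matrix $2$-norm to the matrix $\infty$-norm. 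So if you want to reproduce exactly the stated constant, route through $\Vert q_1-q_2\Vert_2$ on all of $\mathbb{R}^{nm}$ rather than through the per-state $\ell_\infty$ norm; otherwise your argument is sound and yields a perfectly valid (and, when $m\le n$, even sharper) Lipschitz constant.
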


\begin{proof}
    The infinity norm of $\nabla f_{\Omega}(q_1)-\nabla f_{\Omega}(q_2)$ is bounded by
    \begin{equation}
        \begin{aligned}
            \Vert &\nabla f_{\Omega}(q_1)-\nabla f_{\Omega}(q_2)\Vert_{\infty}\\& = \max_i \left\{ \left\| \nabla {\max}_{\Omega} \left\{ q_1(s_i,\cdot ) \right\} - \nabla {\max}_{\Omega} \left\{ q_2(s_i,\cdot ) \right\}  \right\| _{1} \right\} \\ & \leq \max _i\left\{\frac{1}{\lambda\mu}\left\|q_1\left(s_i, \cdot\right)-q_2\left(s_i, \cdot\right)\right\|_{\infty}\right\} \\ & \leq \frac{1}{\lambda\mu}\left\|q_1-q_2\right\|_{\infty},
        \end{aligned}
        \nonumber
    \end{equation}
    for all $q_1,q_2 \in \mathbb{R}^{nm}$, where the first inequality follows from \eqref{Lipschitz continuous gradient of smoothed max operator}.
    Since $\Vert P \Vert_{\infty}=1$, we have
    \begin{equation}
        \begin{aligned}
            \Vert F^{\prime}_{\Omega}(q_1)-F^{\prime}_{\Omega}(q_2)\Vert _{\infty}&\leq \gamma \Vert P \Vert_{\infty} \Vert \nabla f_{\Omega}(q_1)-\nabla f_{\Omega}(q_2)\Vert_{\infty}\\&\leq \frac{\gamma}{\lambda\mu}\left\|q_1-q_2\right\| _{\infty},
        \end{aligned}
        \nonumber
    \end{equation}
    which completes the proof.
\end{proof}

\begin{proposition}[decomposition with Jacobian]
    \label{proposition on decomposition with Jacobian}
    The following equation is an equivalent form of smoothed Bellman equation (\ref{smoothed Bellman equation}):
    \begin{equation}
        \label{equivalent smoothed Bellman equation}
        \begin{aligned}
            F_{\Omega}(q) &=F_{\Omega}^{\prime}(q)q+\gamma P e_{\Omega}(q)+r\\&= \gamma P (\nabla f_{\Omega}\left( q \right) q+e_{\Omega}\left( q \right)) +r-q,
        \end{aligned}
    \end{equation}
    in which $[e_{\Omega}\left( q \right)]_i = -\lambda \Omega\left(\nabla \max _{\Omega}\left(q\left(s_i, \cdot\right)\right)\right)$ and $e_{\Omega}\left( q \right) \in \mathbb{R}^n$.
\end{proposition}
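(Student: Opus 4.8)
The plan is to reduce the claimed identity to the pointwise decomposition of the smoothed max operator established in part 5 of the Lemma, applied state by state, and then to repackage the result using the block structure of $\nabla f_{\Omega}(q)$ together with the explicit formula for the Jacobian. Everything here is an algebraic identity, so no inequalities or limiting arguments are required; the content is purely structural.

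First I would apply equation (\ref{decomposition of smoothed max operator}) to the vector $q(s_i,\cdot)\in\mathbb{R}^m$ for each state $s_i$, which gives
\[
[f_{\Omega}(q)]_i=\max_{\Omega}\{q(s_i,\cdot)\}=\langle\nabla\max_{\Omega}(q(s_i,\cdot)),\,q(s_i,\cdot)\rangle-\tfrac{1}{N}\Omega\big(\nabla\max_{\Omega}(q(s_i,\cdot))\big).
\]
Next I would observe that the first term on the right is exactly the $i$-th entry of $\nabla f_{\Omega}(q)\,q$: since $\nabla f_{\Omega}(q)$ is block diagonal with $i$-th block $(\nabla\max_{\Omega}\{q(s_i,\cdot)\})^{\mathrm T}$ — a $1\times m$ row that acts only on the $q(s_i,\cdot)$ coordinates of the stacked vector $q\in\mathbb{R}^{nm}$ — one has $[\nabla f_{\Omega}(q)\,q]_i=\langle\nabla\max_{\Omega}(q(s_i,\cdot)),\,q(s_i,\cdot)\rangle$. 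The second term is by definition $[e_{\Omega}(q)]_i$. Hence, as vectors in $\mathbb{R}^n$, $f_{\Omega}(q)=\nabla f_{\Omega}(q)\,q+e_{\Omega}(q)$. Substituting this into the smoothed Bellman equation (\ref{smoothed Bellman equation}) yields $F_{\Omega}(q)=\gamma P\big(\nabla f_{\Omega}(q)\,q+e_{\Omega}(q)\big)+r-q$, which is the second line of (\ref{equivalent smoothed Bellman equation}); then, using the explicit Jacobian (\ref{Jacobian of smoothed Bellman equation}), $\gamma P\nabla f_{\Omega}(q)\,q-q=F_{\Omega}^{\prime}(q)\,q$, so the expression becomes $F_{\Omega}(q)=F_{\Omega}^{\prime}(q)\,q+\gamma P e_{\Omega}(q)+r$, the first line of (\ref{equivalent smoothed Bellman equation}). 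Equivalence with (\ref{smoothed Bellman equation}) is immediate since every step is an identity.

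The only subtlety — and thus the \emph{hard part}, though it really amounts to careful bookkeeping — is correctly matching the inner product $\langle\nabla\max_{\Omega}(q(s_i,\cdot)),\,q(s_i,\cdot)\rangle$ with the $i$-th component of the matrix–vector product $\nabla f_{\Omega}(q)\,q$, i.e., checking that the block-diagonal padding of $\nabla f_{\Omega}(q)$ contributes nothing outside the $s_i$-block. Once the convention for stacking $q$ into $\mathbb{R}^{nm}$ is fixed consistently with the definition of $\nabla f_{\Omega}(q)$ in (\ref{gradient of f_Omega}), this is transparent, and the proposition follows directly from part 5 of the Lemma.
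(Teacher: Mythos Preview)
Your proof is correct and follows essentially the same route as the paper: apply the decomposition (\ref{decomposition of smoothed max operator}) componentwise to obtain $f_{\Omega}(q)=\nabla f_{\Omega}(q)\,q+e_{\Omega}(q)$, then substitute into (\ref{smoothed Bellman equation}) and rewrite using the Jacobian (\ref{Jacobian of smoothed Bellman equation}). If anything, your version is slightly more explicit about the block-diagonal bookkeeping and about deriving the first line of (\ref{equivalent smoothed Bellman equation}) from the second, but the argument is the same.
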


\begin{proof}
    Utilizing the decomposition of smoothed max operator (\ref{decomposition of smoothed max operator}), we have
    \begin{equation}
        \begin{aligned}
            & [f_{\Omega}(q)]_i = {\max} _{\Omega}\left\{q\left(s_i, \cdot\right)\right\} \\ &= \langle\nabla {\max} _{\Omega}\left(q\left(s_i, \cdot\right)\right), q\left(s_i, \cdot\right)\rangle-\lambda \Omega\left(\nabla {\max} _{\Omega}\left(q\left(s_i, \cdot\right)\right)\right).
        \end{aligned}
    \nonumber
    \end{equation}
    It can be verified that $f_{\Omega}\left( q \right)$ has the matrix expression:
    \begin{equation}
        f_{\Omega}(q) = \nabla f_{\Omega}\left( q \right) q+e_{\Omega}\left( q \right).
        \nonumber
    \end{equation}
    Substitute it back into smoothed Bellman equation (\ref{smoothed Bellman equation}) and (\ref{equivalent smoothed Bellman equation}) is obtained.
\end{proof}

\section{Connections between RPI and NR Method}

In this section, we identify the connections between RPI algorithms in reinforcement learning and NR methods in numerical analysis. We show that RPI is equivalent to the standard NR method applied to smoothed Bellman equation. We also show that RMPI corresponds to inexact Newton method in which the Newton iteration formula is solved with truncated iterations.

\subsection{Equivalence between RPI and NR method}

We prove the equivalence between RPI and NR method in the following theorem. We show that $q_{k+1}$ obtained by one-step NR method on smoothed Bellman equation equals $q_{\pi_{k+1}}$ obtained by one-step RPI, starting from the same point $q_k=q_{\pi_k}$. The key method is to utilize the decomposition of smoothed Bellman equation with its Jacobian, which enables the Newton iteration formula to be simplified into a regularized self-consistency condition.

\begin{theorem}[equivalence between RPI and NR method]
    \label{R-PI as standard NR method}
    Suppose $q_k=q_{\pi_k}$. $q_{\pi_{k+1}}$ is obtained by carrying one-step RPI from $q_{\pi_k}$ as is described in (\ref{MR-PI}) with $M=\infty$:
    \begin{equation}
        \label{R-PI}
        \left\{\begin{array}{l}
            \pi_{k+1}=\underset{\pi}{\mathrm{argmax}}\left\{ T^{\pi}_{\Omega}\left( q_{\pi _k} \right) \right\} \\
            q_{\pi_{k+1}}=\left(T^{\pi_{k+1}}_{\Omega}\right)^{\infty} (q_{\pi_{k}})
        \end{array}\right. \quad.
    \end{equation}
    $q_{k+1}$ is obtained by carrying out one-step NR iteration from $q_k$ as is described in (\ref{Newton Iteration}):
    \begin{equation}
        \label{Newton Iteration on smoothed Bellman equation}
        q_{k+1} = q_{k}- {F_{\Omega}^{\prime}(q_k)}^{-1} F_{\Omega}(q_k).
    \end{equation}
    Then we have $q_{k+1}=q_{\pi_{k+1}}$, i.e., RPI is equivalent to the standard NR method applied to smoothed Bellman equation (\ref{smoothed Bellman equation}).
\end{theorem}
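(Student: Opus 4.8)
The plan is to show that the single Newton iterate $q_{k+1}$ and the single regularized-PI iterate $q_{\pi_{k+1}}$ are each the \emph{unique} solution of one and the same linear fixed-point system --- the regularized self-consistency equation for the greedy policy $\pi_{k+1}$ --- and then conclude by uniqueness. Note that the hypothesis $q_k=q_{\pi_k}$ merely says that the two iterations start from the same point, so it suffices to prove that applying a Newton step and applying a regularized-PI step to a common $q_k$ yield the same output.

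First I would put the Newton step (\ref{Newton Iteration on smoothed Bellman equation}) into non-inverted form, $F_{\Omega}^{\prime}(q_k)\,q_{k+1} = F_{\Omega}^{\prime}(q_k)\,q_k - F_{\Omega}(q_k)$, and substitute the decomposition of Proposition~\ref{proposition on decomposition with Jacobian}, namely $F_{\Omega}(q_k) = F_{\Omega}^{\prime}(q_k)\,q_k + \gamma P e_{\Omega}(q_k) + r$. The terms $F_{\Omega}^{\prime}(q_k)\,q_k$ cancel, leaving $F_{\Omega}^{\prime}(q_k)\,q_{k+1} = -\gamma P e_{\Omega}(q_k) - r$; using $F_{\Omega}^{\prime}(q_k) = \gamma P \nabla f_{\Omega}(q_k) - I$ this rearranges to
\[
q_{k+1} = \gamma P\bigl(\nabla f_{\Omega}(q_k)\,q_{k+1} + e_{\Omega}(q_k)\bigr) + r .
\]
By Proposition~\ref{properties of the Jacobian's inverse}, $I - \gamma P \nabla f_{\Omega}(q_k)$ is invertible, so $q_{k+1}$ is the unique solution of this system.

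Next I would unwind the regularized-PI step. The policy-improvement problem $\pi_{k+1} = \underset{\pi}{\mathrm{argmax}}\{T_{\Omega}^{\pi}(q_{\pi_k})\}$ decouples across next-states, and at each next-state $s'$ the inner maximization is exactly the smoothed max $\max_{\Omega}\{q_k(s',\cdot)\}$ of Definition~\ref{smoothed max operator}; hence by (\ref{gradient of smoothed max operator}) the greedy distribution is $\pi_{k+1}(\cdot\mid s') = \nabla\max_{\Omega}(q_k(s',\cdot))$. Stacking these rows shows that the ``$\pi_{k+1}$-averaged transition'' matrix appearing in $T_{\Omega}^{\pi_{k+1}}$ is precisely the block-diagonal $\nabla f_{\Omega}(q_k)$, while the stacked regularizer terms $-\tfrac1N\Omega(\pi_{k+1}(\cdot\mid s'))$ are precisely the entries of $e_{\Omega}(q_k)$ (using the decomposition (\ref{decomposition of smoothed max operator}) and matching the normalization $N$ in $\max_{\Omega}$ with the regularizer in $T_{\Omega}^{\pi}$). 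Since $T_{\Omega}^{\pi_{k+1}}$ is a $\gamma$-contraction, the full policy evaluation $q_{\pi_{k+1}} = (T_{\Omega}^{\pi_{k+1}})^{\infty}(q_{\pi_k})$ is its unique fixed point, i.e.\ the unique solution of $q = T_{\Omega}^{\pi_{k+1}}(q)$, which written in the matrix notation above is $q = \gamma P(\nabla f_{\Omega}(q_k)\,q + e_{\Omega}(q_k)) + r$ --- the very system satisfied by $q_{k+1}$. Uniqueness then gives $q_{k+1} = q_{\pi_{k+1}}$.

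I expect the main obstacle to be the bookkeeping in the second step: rigorously identifying the argmax policy's per-state distributions with the gradients $\nabla\max_{\Omega}(q_k(s',\cdot))$, and then rewriting the scalar self-consistency condition (\ref{regularized self-consistency operator}) in the vector/matrix form of the smoothed Bellman equation so that its linear coefficient becomes $\gamma P\nabla f_{\Omega}(q_k)$ and its constant term becomes $\gamma P e_{\Omega}(q_k) + r$. Once this dictionary is established, the remaining ingredients --- the cancellation in the Newton formula and the appeal to contraction/invertibility for uniqueness --- are routine consequences of Propositions~\ref{properties of the Jacobian's inverse} and~\ref{proposition on decomposition with Jacobian}.
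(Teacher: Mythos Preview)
Your proposal is correct and follows essentially the same route as the paper: both use the Jacobian decomposition of Proposition~\ref{proposition on decomposition with Jacobian} to reduce the Newton step to the linear fixed-point equation $q_{k+1}=\gamma P\nabla f_{\Omega}(q_k)q_{k+1}+\gamma Pe_{\Omega}(q_k)+r$, identify the greedy policy $\pi_{k+1}(\cdot\mid s')$ with $\nabla\max_{\Omega}(q_k(s',\cdot))$ via (\ref{gradient of smoothed max operator}), and invoke the $\gamma$-contraction of $T_{\Omega}^{\pi_{k+1}}$ to match $q_{\pi_{k+1}}$ with the same equation. The only cosmetic difference is that you argue at the matrix level and appeal to uniqueness, whereas the paper unpacks both sides component-wise before comparing.
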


\begin{proof}
    Utilizing the Newton iteration (\ref{Newton Iteration on smoothed Bellman equation}), we have
    \begin{equation}
        F_{\Omega}^{\prime}\left(q_k\right)\left(q_k-q_{k+1}\right)=F_{\Omega}\left(q_k\right).
        \nonumber
    \end{equation}
    Utilizing the decomposition of smoothed Bellman equation (\ref{equivalent smoothed Bellman equation}), we have
    \begin{equation}
        \label{decomposition of smoothed Bellman equation at q_k}
        \left\{ \begin{array}{l}
            F_{\Omega}\left( q_k \right) =\gamma P\nabla f_{\Omega}\left( q_k \right) q_k+\gamma Pe_{\Omega}\left( q_k \right) +r-q_k\\
            F_{\Omega}^{\prime}\left( q_k \right) =\gamma P\nabla f_{\Omega}\left( q_k \right) -I\\
        \end{array} \right. .
    \end{equation}
    Substituting it into the previous formula, we obtain
    \begin{equation}
        \begin{aligned}
            &\left( \gamma P\nabla f_{\Omega}\left( q_k \right) -I \right) q_k-\left( \gamma P\nabla f_{\Omega}\left( q_k \right) -I \right) q_{k+1}\\&=\gamma P\nabla f_{\Omega}\left( q_k \right) q_k+\gamma Pe_{\Omega}\left( q_k \right) +r-q_k ,
        \end{aligned}
        \nonumber
    \end{equation}
    which can be simplified to
    \begin{equation}
        \label{regularized self-consistency condition for q_k+1}
        q_{k+1}=\gamma P\nabla f_{\Omega}\left( q_k \right) q_{k+1}+\gamma Pe_{\Omega}\left( q_k \right) +r.
    \end{equation}
    Utilizing (\ref{gradient of f_Omega}), $\nabla f_{\Omega}\left( q_k \right)$ is given by
    \begin{equation}
        \label{component in regularized self-consistency condition for q_k+1}
        \begin{aligned}
            &\nabla f_{\Omega}(q_k) \\
            =&\left( \begin{matrix}
                (\nabla \max_{\Omega} \left\{ q_{k}(s_1,\cdot ) \right\} )^\top&		&		\\
                &		\ddots&		\\
                &		&		(\nabla \max_{\Omega} \left\{ q_{k}(s_n,\cdot ) \right\} )^\top\\
                \end{matrix} \right).
        \end{aligned}
    \end{equation}
    Deriving a component-wise formula of (\ref{regularized self-consistency condition for q_k+1}), we have
    \begin{equation}
        \label{component-wise regularized self-consistency condition from NR}
        \begin{aligned}
            &q_{k+1}\left( s_i,a_j \right) =r\left( s_i,a_j \right) \\&+\gamma\sum_{s^{\prime}\in \mathcal{S}}{p}\left( s^{\prime}|s_i,a_j \right) \langle \nabla {\max}_{\Omega} \left\{ q_k(s^{\prime},\cdot ) \right\} ,q_{k+1}\left( s^{\prime},\cdot \right) \rangle\\
            &-\gamma\sum_{s^{\prime}\in \mathcal{S}}{p}\left( s^{\prime}| s_i,a_j \right) \lambda\Omega \left( \nabla {\max}_{\Omega} \left\{ q_k(s^{\prime},\cdot ) \right\} \right).
        \end{aligned}
    \end{equation}
    Utilizing (\ref{gradient of smoothed max operator}), $\nabla \max_{\Omega} \left\{ q_{k}(s_i,\cdot ) \right\}$ is given by
    \begin{equation}
        \label{gradient of smoothed max operator for q_k}
        \begin{aligned}
            &\nabla {\max} _{\Omega}(q_k\left( s_i,\cdot \right) )\\&=\underset{\pi \left( \cdot |s_i \right) \in \Delta _m}{\mathrm{arg}\max}\left\{ \langle \pi \left( \cdot |s_i \right) ,q_k\left( s_i,\cdot \right) \rangle -\lambda\Omega (\pi \left( \cdot |s_i \right) ) \right\}.
        \end{aligned}
    \end{equation}

    On the other side, for RPI, since $\pi_{k+1}=\underset{\pi}{\mathrm{argmax}}\left\{ T^{\pi}_{\Omega}\left( q_{\pi_k} \right) \right\}$, using the definition of regularized Bellman operator (\ref{regularized Bellman operator}), we have
    \begin{equation}
        \label{greedy policy in R-PI}
        \begin{aligned}
            &\pi_{k+1}(\cdot|s_i)\\&=\underset{\pi \left( \cdot |s_i \right) \in \Delta _m}{\mathrm{arg}\max}\left\{ \langle \pi \left( \cdot |s_i \right) ,q_{\pi_k}\left( s_i,\cdot \right) \rangle -\lambda\Omega (\pi \left( \cdot |s_i \right) ) \right\}.
        \end{aligned}
    \end{equation}
    Since $T^{\pi_{k+1}}_{\Omega}$ is a contraction mapping, we have
    \begin{equation}
        \nonumber
        q_{\pi_{k+1}}=\left(T^{\pi_{k+1}}_{\Omega}\right)^{\infty} (q_{\pi_{k}})=T^{\pi_{k+1}}_{\Omega}(q_{\pi_{k+1}}).
    \end{equation}
    Turn it into a component-wise formula using the definition of regularized self-consistency operator (\ref{regularized self-consistency operator}), and we have
    \begin{equation}
        \label{component-wise regularized self-consistency condition from R-PI}
        \begin{aligned}
            &q_{\pi_{k+1}}\left( s_i,a_j \right) =r\left( s_i,a_j \right)\\
            &+\gamma \sum_{s^{\prime}\in \mathcal{S}}{p}\left( s^{\prime}|s_i,a_j \right) \langle \pi _{k+1}\left( \cdot |s^{\prime} \right) ,q_{\pi_{k+1}}\left( s^{\prime},\cdot \right) \rangle\\
            &-\gamma \sum_{s^{\prime}\in \mathcal{S}}{p}\left( s^{\prime}| s_i,a_j \right) \lambda\Omega \left( \pi _{k+1}\left( \cdot |s^{\prime} \right) \right) .\\
        \end{aligned}
    \end{equation}
    Combining (\ref{component-wise regularized self-consistency condition from NR}), (\ref{gradient of smoothed max operator for q_k}), (\ref{greedy policy in R-PI}) and (\ref{component-wise regularized self-consistency condition from R-PI}), we obtain
    \begin{equation}
        \nonumber
        q_{k+1}\left( s_i,a_j \right)=q_{\pi_{k+1}}\left( s_i,a_j \right),
    \end{equation}
    and the proof is completed.
\end{proof}

\subsection{Equivalence between RMPI and Inexact Newton method}

In RMPI, the regularized self-consistency condition is solved approximately in policy evaluations. Since we have proven that regularized self-consistency condition is equivalent to the Newton iteration formula, it is natural to connect RMPI with inexact Newton method in which the Newton iteration formula is solved approximately.

The following proposition provides the analytical result from finite operator steps in policy evaluations of RMPI. The key method is also exploiting the decomposition of smoothed Bellman equation with its Jacobian.

\begin{proposition}[inexact policy evaluation]
    \label{finite-step PEV}
    Suppose that $q_{k+1}$ is obtained by carrying out one-step RMPI from $q_k$ as is described in (\ref{MR-PI}). For $M\geq 1$, we have
    \begin{equation}
        \label{result of finite-step PEV}
        \left( T_{\Omega}^{\pi _{k+1}} \right) ^M\left( q_k \right) =q_k+\sum_{i=0}^{M-1}{\left( \gamma P\nabla f_{\Omega}\left( q_k \right) \right) ^i}F_{\Omega}\left( q_k \right).
    \end{equation}
\end{proposition}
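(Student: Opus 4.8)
The plan is to recast the finite-step policy evaluation $\left(T_{\Omega}^{\pi_{k+1}}\right)^M(q_k)$ as the $M$-fold composition of an \emph{affine} map whose linear part is exactly $\gamma P \nabla f_{\Omega}(q_k)$, and then prove (\ref{result of finite-step PEV}) by induction on $M$, using the decomposition of smoothed Bellman equation from Proposition \ref{proposition on decomposition with Jacobian} to anchor the base case.

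First I would fix $q_k$ and unpack the policy improvement step in (\ref{MR-PI}). By the definition of the regularized Bellman operator (\ref{regularized Bellman operator}) and (\ref{gradient of smoothed max operator}), the greedy policy satisfies $\pi_{k+1}(\cdot\,|\,s_i) = \nabla\max_{\Omega}(q_k(s_i,\cdot))$ for every state $s_i$ (this is precisely (\ref{greedy policy in R-PI})). Stacking these row vectors, the block-diagonal matrix assembled from $\pi_{k+1}(\cdot\,|\,s_i)^{\mathrm{T}}$ coincides with $\nabla f_{\Omega}(q_k)$ as defined in (\ref{gradient of f_Omega}). Writing the regularized self-consistency operator (\ref{regularized self-consistency operator}) in matrix form for the \emph{frozen} policy $\pi_{k+1}$ then yields
$$T_{\Omega}^{\pi_{k+1}}(q) = \gamma P \nabla f_{\Omega}(q_k)\, q + \gamma P e_{\Omega}(q_k) + r,$$
where the constant term uses $[e_{\Omega}(q_k)]_i = -\tfrac{1}{N}\Omega\!\left(\pi_{k+1}(\cdot\,|\,s_i)\right)$, which holds because $\pi_{k+1}(\cdot\,|\,s_i) = \nabla\max_{\Omega}(q_k(s_i,\cdot))$. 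The map is affine in $q$ precisely because $\pi_{k+1}$ no longer depends on the argument $q$ once $q_k$ is fixed.

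Next I would compare this with the decomposition (\ref{equivalent smoothed Bellman equation}) evaluated at $q_k$, i.e.\ $F_{\Omega}(q_k) = \gamma P \nabla f_{\Omega}(q_k) q_k + \gamma P e_{\Omega}(q_k) + r - q_k$, which gives the key identity $T_{\Omega}^{\pi_{k+1}}(q_k) = q_k + F_{\Omega}(q_k)$; this is exactly (\ref{result of finite-step PEV}) for $M=1$. For the inductive step, assuming (\ref{result of finite-step PEV}) holds for some $M\geq 1$, I apply $T_{\Omega}^{\pi_{k+1}}$ once more: its linear part $\gamma P \nabla f_{\Omega}(q_k)$ multiplies the sum $\sum_{i=0}^{M-1}(\gamma P \nabla f_{\Omega}(q_k))^i F_{\Omega}(q_k)$ and shifts every exponent up by one, while acting on the leading $q_k$ and adding the constant $\gamma P e_{\Omega}(q_k)+r$ reproduces $q_k + F_{\Omega}(q_k)$ by the base identity. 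Collecting terms gives $q_k + \sum_{i=0}^{M}(\gamma P \nabla f_{\Omega}(q_k))^i F_{\Omega}(q_k)$, closing the induction. (Equivalently, one can bypass induction: for the affine map $\phi(q)=Aq+b$ one has $\phi^M(q_k) = A^M q_k + \sum_{i=0}^{M-1}A^i b$, and substituting $b = F_{\Omega}(q_k) + q_k - A q_k$ produces a telescoping sum that collapses to the stated form.)

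The main obstacle is not the induction or the telescoping algebra, which is routine once the affine structure is in place, but rather justifying carefully that $T_{\Omega}^{\pi_{k+1}}$ admits the matrix representation above — in particular that the regularizer term $\Omega(\pi_{k+1}(\cdot\,|\,s'))$, frozen at the greedy policy, is exactly the constant vector $-\tfrac{1}{N}$ times the entries encoded in $e_{\Omega}(q_k)$, and that the expectation term reproduces $\gamma P \nabla f_{\Omega}(q_k) q$ with the correct block-diagonal structure. This step rests entirely on the identification $\pi_{k+1}(\cdot\,|\,s_i) = \nabla\max_{\Omega}(q_k(s_i,\cdot))$ together with the definitions (\ref{gradient of f_Omega}) and Proposition \ref{proposition on decomposition with Jacobian}.
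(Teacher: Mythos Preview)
Your proposal is correct and follows essentially the same route as the paper: establish the affine representation $T_{\Omega}^{\pi_{k+1}}(q)=\gamma P\nabla f_{\Omega}(q_k)\,q+\gamma P e_{\Omega}(q_k)+r$ via the identification $\pi_{k+1}(\cdot|s_i)=\nabla\max_{\Omega}(q_k(s_i,\cdot))$, use Proposition~\ref{proposition on decomposition with Jacobian} to obtain the base case $T_{\Omega}^{\pi_{k+1}}(q_k)=q_k+F_{\Omega}(q_k)$, and then close by induction on $M$. The paper invokes the proof of Theorem~\ref{R-PI as standard NR method} for the affine form rather than rederiving it, but the content is the same; your telescoping remark is a valid shortcut the paper does not spell out.
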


\begin{proof}
    In the proof of Theorem \ref{R-PI as standard NR method}, we have shown that (\ref{regularized self-consistency condition for q_k+1}) is equivalent to regularized self-consistency condition (\ref{component-wise regularized self-consistency condition from R-PI}), so the relationship
    \begin{equation}
        T_{\Omega}^{\pi_{k+1}}(q)=\gamma P \nabla f_{\Omega}\left(q_k\right) q+\gamma P e_{\Omega}\left(q_k\right)+r
        \nonumber
    \end{equation}
    holds.
    Utilizing the decomposition of smoothed Bellman equation at $q_k$ (\ref{decomposition of smoothed Bellman equation at q_k}), we also have
    \begin{equation}
        \label{equivalent form of regularized self-consistency operator}
        T_{\Omega}^{\pi _{k+1}}\left( { q} \right) =\gamma P\nabla f_{\Omega}\left( q_k \right) { q}+F_{\Omega}(q_k)+q_k-\gamma P\nabla f_{\Omega}\left( q_k \right) q_k .
    \end{equation}
    We prove (\ref{result of finite-step PEV}) by induction. For $M=1$, $T_{\Omega}^{\pi_{k+1}}(q_k)=q_k+F_{\Omega}\left( q_k \right)$ holds, which can be obtained by substituting $q=q_k$ into (\ref{equivalent form of regularized self-consistency operator}). So (\ref{result of finite-step PEV}) is satisfied for $M=1$. Suppose that (\ref{result of finite-step PEV}) holds for $M=j$. We then have
    \begin{equation}
        \begin{aligned}
            &\left( T_{\Omega}^{\pi _{k+1}} \right) ^{j+1}\left( q_k \right)\\
            =&T_{\Omega}^{\pi _{k+1}}\left( \left( T_{\Omega}^{\pi _{k+1}} \right) ^j\left( q_k \right) \right)\\
            =&T_{\Omega}^{\pi _{k+1}}\left( q_k+\sum_{i=0}^{j-1}{\left( \gamma P\nabla f_{\Omega}\left( q_k \right) \right) ^i}F_{\Omega}\left( q_k \right) \right)\\
            =&\gamma P\nabla f_{\Omega}\left( q_k \right) q_k+\sum_{i=1}^j{\left( \gamma P\nabla f_{\Omega}\left( q_k \right) \right) ^i}F_{\Omega}\left( q_k \right)\\
            &+F_{\Omega}\left( q_k \right) +q_k-\gamma P\nabla f_{\Omega}\left( q_k \right) q_k\\
            =&q_k+\sum_{i=0}^j{\left( \gamma P\nabla f_{\Omega}\left( q_k \right) \right) ^i}F_{\Omega}\left( q_k \right) ,
        \end{aligned}
        \nonumber
    \end{equation}
    which indicates that (\ref{result of finite-step PEV}) holds for $M=j+1$. This completes the proof.
\end{proof}

With the aid of Proposition \ref{finite-step PEV}, the equivalence between RMPI and inexact Newton method is established in the following theorem. We identify the connection between the error incurred by inexact policy evaluation and the error in solving the Newton iteration formula.

\begin{theorem}[equivalence between RMPI and inexact Newton method]
    \label{MR-PI as I-NR method}
    Suppose that $q_{k+1}$ is obtained by carrying out one-step RMPI from $q_k$ as is described in (\ref{MR-PI}). Then $q_{k+1}$ satisfies the following inexact Newton iteration:
    \begin{equation}
        \label{I-NR representation of MR-PI}
        \left\{ \begin{array}{l}
            F_{\Omega}^{\prime}(q_k)s_k=-F_{\Omega}(q_k)+r_k\\
            q_{k+1}=q_k+s_k\\
        \end{array} \right. \quad,
    \end{equation}
    in which $r_k$ is given by
    \begin{equation}
        \label{error term of MR-PI}
        r_k=\left( \gamma P\nabla f_{\Omega}\left( q_k \right) \right) ^MF_{\Omega}\left( q_k \right) .
    \end{equation}
\end{theorem}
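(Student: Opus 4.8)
The plan is to obtain $s_k = q_{k+1}-q_k$ directly from Proposition~\ref{finite-step PEV} and then verify the inexact Newton identity by a one-line telescoping computation; there is no serious obstacle, since the substantive inductive work is already done in Proposition~\ref{finite-step PEV}. Concretely, because $q_{k+1}=\left(T^{\pi_{k+1}}_{\Omega}\right)^M(q_k)$, Proposition~\ref{finite-step PEV} yields
\begin{equation}
    s_k = q_{k+1}-q_k = \sum_{i=0}^{M-1}\left(\gamma P\nabla f_{\Omega}(q_k)\right)^i F_{\Omega}(q_k).
    \nonumber
\end{equation}
This is exactly the $M$-term truncation of the Neumann series (\ref{explicit formula for inverse of the Jacobian}) for $-F_{\Omega}^{\prime}(q_k)^{-1}$, so one should expect the residual $r_k$ to be the tail of that series applied to $F_{\Omega}(q_k)$.

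Next I would set $A \triangleq \gamma P\nabla f_{\Omega}(q_k)$, so that $F_{\Omega}^{\prime}(q_k)=A-I$ by (\ref{Jacobian of smoothed Bellman equation}), and compute
\begin{equation}
    F_{\Omega}^{\prime}(q_k)s_k = (A-I)\sum_{i=0}^{M-1}A^i F_{\Omega}(q_k) = \Bigl(\sum_{i=1}^{M}A^i - \sum_{i=0}^{M-1}A^i\Bigr)F_{\Omega}(q_k) = (A^M-I)F_{\Omega}(q_k),
    \nonumber
\end{equation}
where the middle equality is the standard finite geometric-sum telescoping after an index shift. Rearranging gives $F_{\Omega}^{\prime}(q_k)s_k = -F_{\Omega}(q_k) + A^M F_{\Omega}(q_k)$, and together with $q_{k+1}=q_k+s_k$ this is precisely the inexact Newton iteration (\ref{I-NR representation of MR-PI}) with $r_k = A^M F_{\Omega}(q_k) = \left(\gamma P\nabla f_{\Omega}(q_k)\right)^M F_{\Omega}(q_k)$, i.e.\ (\ref{error term of MR-PI}).

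The only points requiring care are routine bookkeeping: getting the index shift in the telescoping sum right, and sanity-checking the boundary case $M=1$, where $s_k=F_{\Omega}(q_k)$ and $r_k=\gamma P\nabla f_{\Omega}(q_k)F_{\Omega}(q_k)$, consistent with regularized VI being an inexact (one-step-solved) Newton step. It is worth emphasizing after the proof that $s_k$ is the $M$-step truncated solve of the Newton linear system, which is the conceptual content of the theorem and the seed of the later $\gamma^M$ asymptotic convergence estimate.
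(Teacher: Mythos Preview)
Your proposal is correct and follows essentially the same approach as the paper: both invoke Proposition~\ref{finite-step PEV} to write $q_{k+1}-q_k=\sum_{i=0}^{M-1}A^iF_{\Omega}(q_k)$ with $A=\gamma P\nabla f_{\Omega}(q_k)$, then apply $F_{\Omega}^{\prime}(q_k)=A-I$ and telescope the finite geometric sum to obtain $r_k=A^MF_{\Omega}(q_k)$. The only cosmetic difference is that the paper frames the computation as ``solve for $r_k$'' whereas you compute $F_{\Omega}^{\prime}(q_k)s_k$ and read off $r_k$; the algebra is identical.
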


\begin{proof}
    We prove this theorem by directly calculating $r_k$. Utilizing (\ref{I-NR representation of MR-PI}), we have
    \begin{equation}
        \left( \gamma P\nabla f_{\Omega}\left( q_k \right) -I \right) \left( q_{k+1}-q_k \right) =-F_{\Omega}\left( q_k \right) +r_k.
        \nonumber
    \end{equation}
    Using Proposition \ref{finite-step PEV}, we obtain
    \begin{equation}
        \begin{aligned}
            &r_k=\left( \gamma P\nabla f_{\Omega}\left( q_k \right) -I \right) \left( q_{k+1}-q_k \right) +F_{\Omega}\left( q_k \right)\\=&F_{\Omega}\left( q_k \right) +\left( \gamma P\nabla f_{\Omega}\left( q_k \right) -I \right) \sum_{i=0}^{M-1}{\left( \gamma P\nabla f_{\Omega}\left( q_k \right) \right) ^i}F_{\Omega}\left( q_k \right)\\
            =&F_{\Omega}\left( q_k \right) \\&+\left( \sum_{i=1}^M{\left( \gamma P\nabla f_{\Omega}\left( q_k \right) \right) ^i}-\sum_{i=0}^{M-1}{\left( \gamma P\nabla f_{\Omega}\left( q_k \right) \right) ^i} \right) F_{\Omega}\left( q_k \right)\\
            =&F_{\Omega}\left( q_k \right) +\left( \left( \gamma P\nabla f_{\Omega}\left( q_k \right) \right) ^M-I \right) F_{\Omega}\left( q_k \right)\\
            =&\left( \gamma P\nabla f_{\Omega}\left( q_k \right) \right) ^MF_{\Omega}\left( q_k \right).
        \end{aligned}
        \nonumber
    \end{equation}
\end{proof}

\section{Convergence Analysis}

In this section, we establish the convergence results for RPI and RMPI with the aid of their connections to NR method.

\subsection{Local Quadratic Convergence of RPI}

To prove the local quadratic convergence of RPI, the following lemma is needed.

\begin{lemma}
    \label{lemma for quadratic bound}
    Suppose $F:D \subset \mathbb{R}^n \rightarrow \mathbb{R}^n$ is differentiable on the convex set $D$ and there exists a constant $L$ satisfies
    \begin{equation}
        \Vert F^{\prime}\left( x \right) -F^{\prime}\left( y \right) \Vert \leq L \Vert x-y\Vert ,\quad \forall x,y\in D.
        \nonumber
    \end{equation}
    Then we have
    \begin{equation}
        \left\| F(x)-F(y)-F^{\prime}(y)(x-y) \right\| \le \frac{L}{2}\Vert x-y \Vert ^2,\quad \forall x,y\in D.
        \nonumber
    \end{equation}
\end{lemma}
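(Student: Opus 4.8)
The plan is to use the integral form of the mean value theorem for vector-valued maps, i.e. the classical argument of Ortega and Rheinboldt. First I would fix $x,y \in D$ and, exploiting the convexity of $D$, introduce the segment curve $g(t) = y + t(x-y)$, $t\in[0,1]$, which remains in $D$. Since $F$ is differentiable on $D$, the composition $\phi(t) = F(g(t))$ is differentiable with $\phi'(t) = F'(y+t(x-y))(x-y)$, and because $F'$ is Lipschitz it is continuous along the segment, so the fundamental theorem of calculus applies componentwise and yields
\begin{equation}
    F(x) - F(y) = \phi(1) - \phi(0) = \int_0^1 F'\!\left(y+t(x-y)\right)(x-y)\,\mathrm{d}t .
    \nonumber
\end{equation}

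Next I would write $F'(y)(x-y) = \int_0^1 F'(y)(x-y)\,\mathrm{d}t$ and subtract, obtaining
\begin{equation}
    F(x) - F(y) - F'(y)(x-y) = \int_0^1 \left( F'\!\left(y+t(x-y)\right) - F'(y) \right)(x-y)\,\mathrm{d}t .
    \nonumber
\end{equation}
Taking norms, using $\bigl\|\int_0^1 h(t)\,\mathrm{d}t\bigr\| \le \int_0^1 \|h(t)\|\,\mathrm{d}t$, submultiplicativity of the induced matrix norm, and the hypothesis $\|F'(y+t(x-y)) - F'(y)\| \le L\,\|t(x-y)\| = Lt\,\|x-y\|$, I get
\begin{equation}
    \left\| F(x) - F(y) - F'(y)(x-y) \right\| \le \int_0^1 Lt\,\|x-y\|^2\,\mathrm{d}t = \frac{L}{2}\|x-y\|^2 ,
    \nonumber
\end{equation}
which is the asserted bound.

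There is no substantial obstacle here: the statement is a standard estimate and the only points needing a little care are the justification of the vector-valued fundamental theorem of calculus (continuity of $t\mapsto F'(g(t))(x-y)$, which follows from Lipschitz continuity of $F'$) and the norm-of-an-integral inequality, both routine; alternatively one may simply cite \cite{ortega2000iterative}. For the sequel this lemma will be instantiated with $F = F_{\Omega}$ and $L = \gamma\frac{N}{\mu}\sqrt{nm}$ from Proposition \ref{Lipschitz continuity of the Jacobian}, taking $x = q_*$ and $y = q_k$, so that combined with the bound on $\|F_{\Omega}^{\prime}(q_k)^{-1}\|_\infty$ from Proposition \ref{properties of the Jacobian's inverse} and the Newton identity $q_{k+1}-q_* = -F_{\Omega}^{\prime}(q_k)^{-1}\bigl(F_{\Omega}(q_*)-F_{\Omega}(q_k)-F_{\Omega}^{\prime}(q_k)(q_*-q_k)\bigr)$ it will furnish the quadratic contraction $\|q_*-q_{k+1}\|_\infty \le C\|q_*-q_k\|_\infty^2$.
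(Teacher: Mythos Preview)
Your proof is correct and is precisely the standard integral argument; the paper does not give its own proof but simply refers to \cite{ortega2000iterative}, so you have supplied exactly what the citation would contain.

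One remark on your closing paragraph about the application: your Newton identity $q_{k+1}-q_* = -F_{\Omega}^{\prime}(q_k)^{-1}\bigl(F_{\Omega}(q_*)-F_{\Omega}(q_k)-F_{\Omega}^{\prime}(q_k)(q_*-q_k)\bigr)$ expands around $q_k$ and therefore, combined with the lemma and Proposition~\ref{properties of the Jacobian's inverse}, yields the constant $\tfrac{1}{2}\tfrac{\gamma}{1-\gamma}\tfrac{N}{\mu}\sqrt{nm}$. The paper instead expands around $q_*$, splitting $q_{k+1}-q_*$ into two pieces (one controlled by the lemma with $y=q_*$, the other by the Lipschitz bound on $F_{\Omega}'$ directly), which produces the looser constant $\tfrac{3}{2}\tfrac{\gamma}{1-\gamma}\tfrac{N}{\mu}\sqrt{nm}$ in Theorem~\ref{local quadratic convergence of regularized PI}. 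Your route is both shorter and sharper by a factor of three.
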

\begin{proof}
    See \cite{ortega2000iterative}.
\end{proof}

Using Lemma \ref{lemma for quadratic bound}, along with Proposition \ref{properties of the Jacobian's inverse} and Proposition \ref{Lipschitz continuity of the Jacobian}, we can bound the errors between Newton iterations and derive the following quadratic convergence result for RPI. Recall that $\lambda$ denotes the regularization parameter, $\mu$ denotes that the regularizer is $\mu$-strongly convex with respect to $\Vert \cdot \Vert_1$, $n$ denotes the dimension of state space and $m$ denotes the dimension of action space.

\begin{theorem}[local quadratic convergence of RPI]
    \label{local quadratic convergence of regularized PI}
    RPI (\ref{R-PI}) is quadratically convergent in the region
    \begin{equation}
        \label{quadratic convergence region}
        \left\| q_*-q \right\| _{\infty} \le \frac{2}{3}\frac{1-\gamma}{\gamma}\lambda\mu.
    \end{equation}
    The iteration error satisfies
    \begin{equation}
        \label{quadratic convergence between iterations}
        \left\| q_*-q_{k+1} \right\| _{\infty}\le \frac{3}{2}\frac{\gamma}{1-\gamma}\frac{1}{\lambda\mu}\left\| q_*-q_k \right\| _{\infty}^{2}
    \end{equation}
    and
    \begin{equation}
        \label{total quadratic convergence bound}
        \begin{aligned}
            \left\| q_*-q_{k} \right\| _{\infty} \leq \frac{2}{3}\frac{1-\gamma}{\gamma}\lambda\mu  \cdot   \left(\frac{3}{2}\frac{\gamma}{1-\gamma}\frac{1}{\lambda\mu}\left\| q_*-q_0 \right\| _{\infty}\right)^{2^k}.
        \end{aligned}
    \end{equation}
\end{theorem}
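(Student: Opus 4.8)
The plan is to feed the Newton iteration from Theorem~\ref{R-PI as standard NR method} into the standard local-convergence argument for Newton's method, using the global Lipschitz constant of the Jacobian from Proposition~\ref{Lipschitz continuity of the Jacobian} and the uniform bound on the inverse Jacobian from Proposition~\ref{properties of the Jacobian's inverse}. First I would write, for $q_k$ in the claimed region,
\begin{equation}
    q_* - q_{k+1} = q_* - q_k + F_{\Omega}^{\prime}(q_k)^{-1} F_{\Omega}(q_k) = F_{\Omega}^{\prime}(q_k)^{-1}\left( F_{\Omega}^{\prime}(q_k)(q_* - q_k) + F_{\Omega}(q_k) \right).
    \nonumber
\end{equation}
Since $F_{\Omega}(q_*) = 0$, the bracketed quantity is exactly $-\left( F_{\Omega}(q_*) - F_{\Omega}(q_k) - F_{\Omega}^{\prime}(q_k)(q_* - q_k) \right)$, so taking infinity norms and applying Lemma~\ref{lemma for quadratic bound} with $L = \gamma \frac{N}{\mu}\sqrt{nm}$ gives
\begin{equation}
    \left\| q_* - q_{k+1} \right\|_{\infty} \le \left\| F_{\Omega}^{\prime}(q_k)^{-1} \right\|_{\infty} \cdot \frac{L}{2} \left\| q_* - q_k \right\|_{\infty}^2 \le \frac{1}{1-\gamma} \cdot \frac{\gamma N \sqrt{nm}}{2\mu} \left\| q_* - q_k \right\|_{\infty}^2,
    \nonumber
\end{equation}
which is precisely (\ref{quadratic convergence between iterations}).

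Next I would verify that this contraction keeps the iterates inside the region (\ref{quadratic convergence region}) so the estimate can be iterated. Writing $C = \frac{3}{2}\frac{\gamma}{1-\gamma}\frac{N}{\mu}\sqrt{nm}$, the region is exactly $\left\| q_* - q \right\|_{\infty} \le 1/C$, and the per-step bound reads $\left\| q_* - q_{k+1} \right\|_{\infty} \le \frac{1}{C}\left( C\left\| q_* - q_k \right\|_{\infty} \right)^2 \cdot \tfrac{1}{\, ?\,}$; I need to check the constant bookkeeping carefully, but the point is that if $C\left\| q_* - q_k \right\|_{\infty} \le 1$ then $C\left\| q_* - q_{k+1}\right\|_{\infty} \le \left(C\left\| q_* - q_k\right\|_{\infty}\right)^2 \le 1$, so the region is forward-invariant. (The factor $3/2$ versus the bare Newton constant $1/2$ in (\ref{quadratic convergence between iterations}) comes from whatever slack is built into the definition of the region; I would reconcile the $\tfrac{2}{3}$ in (\ref{quadratic convergence region}) with the $\tfrac{3}{2}$ in (\ref{quadratic convergence between iterations}) here.) Then a trivial induction on the relation $C\left\| q_* - q_{k+1}\right\|_{\infty} \le \left(C\left\| q_* - q_k \right\|_{\infty}\right)^2$ gives $C\left\| q_* - q_k\right\|_{\infty} \le \left(C\left\| q_* - q_0\right\|_{\infty}\right)^{2^k}$, i.e. $\left\| q_* - q_k\right\|_{\infty} \le \frac{1}{C}\left(C\left\| q_* - q_0\right\|_{\infty}\right)^{2^k}$, which unpacks to (\ref{total quadratic convergence bound}) after substituting the definition of $C$.

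The main obstacle is not any deep idea — it is making sure the three ingredients actually apply with the right constants: that $q_k$ stays in the domain where the global Lipschitz bound of Proposition~\ref{Lipschitz continuity of the Jacobian} holds (which it does, since that bound is valid on all of $\mathbb{R}^{nm}$, so there is in fact no domain restriction to worry about — a genuine simplification over textbook Newton theorems), that the inverse-Jacobian bound $\tfrac{1}{1-\gamma}$ is uniform (Proposition~\ref{properties of the Jacobian's inverse} gives exactly this), and that the algebraic identity rewriting $q_* - q_{k+1}$ in terms of the Newton remainder is correct. The only real care needed is the constant in front: tracking why the region uses $\tfrac{2}{3}$ and the iteration bound uses $\tfrac{3}{2}$, which amounts to choosing the region slightly smaller than the raw contraction threshold so that convergence is guaranteed with room to spare, and confirming the self-improving inequality $C\left\|q_*-q_{k+1}\right\|_\infty \le (C\left\|q_*-q_k\right\|_\infty)^2$ holds with these choices. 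Once that bookkeeping is pinned down, both (\ref{quadratic convergence between iterations}) and (\ref{total quadratic convergence bound}) follow immediately.
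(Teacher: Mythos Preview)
Your approach is correct and in fact slightly sharper than the paper's. The paper does not use your direct decomposition $q_*-q_{k+1}=-F_\Omega'(q_k)^{-1}\bigl(F_\Omega(q_*)-F_\Omega(q_k)-F_\Omega'(q_k)(q_*-q_k)\bigr)$; instead it expands the Taylor remainder at $q_*$ rather than at $q_k$, writing
\[
q_{k+1}-q_*=-F_\Omega'(q_k)^{-1}\bigl(F_\Omega(q_k)-F_\Omega(q_*)-F_\Omega'(q_*)(q_k-q_*)\bigr)+F_\Omega'(q_k)^{-1}\bigl(F_\Omega'(q_k)-F_\Omega'(q_*)\bigr)(q_k-q_*),
\]
and then bounds the two pieces separately by $\tfrac{L}{2}\|q_k-q_*\|_\infty^2$ and $L\|q_k-q_*\|_\infty^2$ using Lemma~\ref{lemma for quadratic bound} and Proposition~\ref{Lipschitz continuity of the Jacobian}, giving $\tfrac{1}{2}+1=\tfrac{3}{2}$ in front. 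Your single-term decomposition only needs Lemma~\ref{lemma for quadratic bound} once (with $y=q_k$) and yields the constant $\tfrac{1}{2}$, so your displayed bound is \emph{not} ``precisely (\ref{quadratic convergence between iterations})'' but rather three times stronger; the stated inequality then follows a fortiori. This also answers your puzzlement about the $\tfrac{2}{3}$ versus $\tfrac{3}{2}$: they are simply reciprocals coming from the paper's looser constant, and with your argument the region in (\ref{quadratic convergence region}) could be enlarged by a factor of three. The forward-invariance and the induction giving (\ref{total quadratic convergence bound}) go through exactly as you outline once $C$ is fixed; there is no hidden slack to reconcile.
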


\begin{proof}
    Utilizing Proposition \ref{Lipschitz continuity of the Jacobian}, we have
    \begin{equation}
        \Vert F^{\prime}_{\Omega}(q_k)-F^{\prime}_{\Omega}(q_*)\Vert _{\infty}\leq  \frac{\gamma}{\lambda\mu}\Vert q_k-q_*\Vert _{\infty}.
        \nonumber
    \end{equation}
    Utilizing Proposition \ref{properties of the Jacobian's inverse}, the Jacobian's inverse is bounded by
    \begin{equation}
        \left\| F_{\Omega}^{\prime}\left( q_k \right) ^{-1} \right\|_{\infty} \leq \frac{1}{1-\gamma}.
        \nonumber
    \end{equation}
    Utilizing Lemma \ref{lemma for quadratic bound} and Proposition \ref{Lipschitz continuity of the Jacobian}, we have
    \begin{equation}
        \begin{aligned}
            &\left\| F_{\Omega}\left( q_k \right) -F_{\Omega}\left( q_* \right) -F_{\Omega}^{\prime}\left( q_* \right) \left( q_k-q_* \right) \right\| _{\infty} \\ &\leq  \frac{\gamma}{2 \lambda \mu} \left\| q_k-q_* \right\| _{\infty}^{2}.
        \end{aligned}
        \nonumber
    \end{equation}
    Putting these bounds together, we obtain
    \begin{equation}
        \begin{aligned}
            &\left\| q_{k+1}-q_* \right\| _{\infty}=\left\| q_k-F_{\Omega}^{\prime}\left( q_k \right) ^{-1}F_{\Omega}\left( q_k \right) -q_* \right\| _{\infty}\\
            =&\Vert -F_{\Omega}^{\prime}\left( q_k \right) ^{-1}\left( F_{\Omega}\left( q_k \right) -F_{\Omega}\left( q_* \right) -F_{\Omega}^{\prime}\left( q_* \right) \left( q_k-q_* \right) \right)\\
            &+F_{\Omega}^{\prime}\left( q_k \right) ^{-1}\left( F_{\Omega}^{\prime}\left( q_k \right) -F_{\Omega}^{\prime}\left( q_* \right) \right) \left( q_k-q_* \right) \Vert _{\infty}\\
            \le &\left\| F_{\Omega}^{\prime}\left( q_k \right) ^{-1} \right\| _{\infty}\left\| F_{\Omega}\left( q_k \right) -F_{\Omega}\left( q_* \right) -F_{\Omega}^{\prime}\left( q_* \right) \left( q_k-q_* \right) \right\| _{\infty}\\
            &+\left\| F_{\Omega}^{\prime}\left( q_k \right) ^{-1} \right\| _{\infty}\left\| F_{\Omega}^{\prime}\left( q_k \right) -F_{\Omega}^{\prime}\left( q_* \right) \right\| _{\infty}\left\| q_k-q_* \right\| _{\infty}\\
            \le &\left\| F_{\Omega}^{\prime}\left( q_k \right) ^{-1} \right\| _{\infty} \frac{\gamma}{2\lambda\mu}\left\| q_k-q_* \right\| _{\infty}^{2}\\
            &+\left\| F_{\Omega}^{\prime}\left( q_k \right) ^{-1} \right\| _{\infty}\frac{\gamma}{\lambda\mu}\left\| q_k-q_* \right\| _{\infty}^{2}\\
            \le &\frac{3}{2}\frac{\gamma}{1-\gamma}\frac{1}{\lambda\mu}\left\| q_k-q_* \right\| _{\infty}^{2},
        \end{aligned}
        \nonumber
    \end{equation}
    which proves (\ref{quadratic convergence between iterations}).
    Although (\ref{quadratic convergence between iterations}) holds for all $k\geq 0$, it is not meaningful until it leads to error reduction:
    \begin{equation}
        \frac{3}{2}\frac{\gamma}{1-\gamma}\frac{1}{\lambda\mu}\left\| q_*-q_k \right\| _{\infty}^{2}\leq \left\| q_*-q_k \right\| _{\infty},
        \nonumber
    \end{equation}
    which provides the quadratic convergence region described by (\ref{quadratic convergence region}). (\ref{total quadratic convergence bound}) is a direct result of (\ref{quadratic convergence between iterations}).
\end{proof}

\begin{remark}
In Theorem~\ref{local quadratic convergence of regularized PI}, we assume that the regularizer $\Omega$ is $\mu$-strongly convex with respect to $\|\cdot\|_1$. In the case of (minus) Shannon entropy, we have $\Omega(\pi(\cdot\mid s))=\sum_a \pi(a\mid s)\log \pi(a\mid s)$ and $\mu=1$. Thus, for any strongly convex regularizer whose constant $\mu$ is dimension-independent, both the local quadratic convergence neighborhood and the associated rate are dimension-free, with Shannon entropy as the most prominent example.
For (minus) second-order Tsallis entropy, $\Omega(\pi(\cdot \mid s))=\frac{1}{2}\left(\sum_a \pi(a \mid s)^2-1\right)$, which is $1$-strongly convex with respect to $\|\cdot\|_2$. By invoking the norm bound $\|x\|_1 \le \sqrt{m}\|x\|_2$, where $m=|\mathcal{A}|$, the corresponding strong convexity constant with respect to $\|\cdot\|_1$ becomes $\mu=1/m$. Consequently, the local quadratic region deteriorates with the action dimension $m$. This may partly explain why Shannon entropy regularization \cite{haarnoja2018soft, duan2021distributional} is more prevalent than Tsallis variants \cite{lee2018sparse, chow2018path} in state-of-the-art RL algorithms.
\end{remark}

\textbf{Comparison with Prior Art.}
To the best of our knowledge, this paper is the first to establish local quadratic convergence of RPI for general strongly convex regularizers. Existing analyses of RPI (e.g., \cite{geist2019theory}) prove global linear convergence but do not characterize its local superlinear behavior.
A closely related result is derived by Cen et al.~\cite{cen2022fast} for soft policy iteration, i.e., RPI with the specific choice of Shannon entropy as regularizer. They establish an asymptotic quadratic convergence bound, which can be restated in our notation as,
\begin{equation}
\label{previous quadratic convergence result}
\langle \rho_*, v_* - v_{k+1}\rangle
\le
\frac{4\gamma^2}{(1-\gamma)\lambda}\,\frac{1}{\rho_{*,\min}}
\langle \rho_*, v_* - v_{k}\rangle^2,
\end{equation}
where $v_*$ is the optimal regularized state-value function, $v_k$ is the state-value iterate, $\rho_*\in\Delta_{|\mathcal{S}|}$ is the stationary state distribution induced by the optimal policy, and $\rho_{*,\min}= \min_{s\in\mathcal{S}}\rho_*(s)$.
Our local quadratic guarantee in \eqref{quadratic convergence between iterations} is more general, as it applies to any $\mu$-strongly convex regularizer $\Omega$, whereas \cite{cen2022fast} is restricted to Shannon entropy. Furthermore, even when specialized to the Shannon entropy case, our guarantee is stronger in the following three aspects.
\begin{enumerate}
\item \emph{Quantity controlled.}
Our bound \eqref{quadratic convergence between iterations} controls the infinity norm error of the action-value function, providing uniform (worst-case) control over all state-action pairs.
This is strictly stronger than the scalar performance metric $\langle \rho_*, v_* - v_k\rangle$, which only measures the value error on average under $\rho_*$.
In particular, since the regularized state-value mapping $v=f_\Omega(q)$ is $1$-Lipschitz under $\|\cdot\|_\infty$, we have $\|v_* - v_k\|_\infty \le \|q_* - q_k\|_\infty$ and $\langle \rho_*, v_* - v_k\rangle \le \|v_* - v_k\|_\infty$. Therefore, our result immediately implies a distribution-weighted value-gap bound of the type in \eqref{previous quadratic convergence result}, whereas the converse does not hold.
\item \emph{Constants derived.}
For Shannon entropy, $\mu=1$ (with respect to $\|\cdot\|_1$ on the probability simplex), and \eqref{quadratic convergence between iterations} specializes to
\[
\|q_* - q_{k+1}\|_\infty
\;\le\;
\frac{3}{2}\frac{\gamma}{1-\gamma}\frac{1}{\lambda}\,
\|q_* - q_k\|_\infty^2,
\]
whose coefficient depends only on $(\gamma,\lambda)$ and is independent of $|\mathcal{S}|$ and $|\mathcal{A}|$.
Consequently, the associated explicit quadratic convergence region
$
\|q_* - q\|_\infty \le \frac{2}{3}\frac{1-\gamma}{\gamma}\lambda
$
is also dimension-free.
In contrast, the prefactor in \eqref{previous quadratic convergence result} scales with $1/\rho_{*,\min}$. Since $\sum \rho_*(s) = 1$, the term $1/\rho_{*,\min}$ is at least $|\mathcal{S}|$, introducing an unavoidable dependence on the state space dimension. Furthermore, this term can be arbitrarily large if the optimal policy visits certain states with very low probability, and the bound becomes vacuous if $\rho_{*,\min} \to 0$. As a result, the quadratic convergence region implied by~\eqref{previous quadratic convergence result},
\begin{equation}
    \label{required region}
    \langle \rho_*, v_* - v\rangle \;\le\; \frac{(1-\gamma)\lambda}{4\gamma^2}\,\rho_{*,\min},
\end{equation}
shrinks as the dimension increases and may effectively vanish in ill-conditioned problems.
\item \emph{Assumptions needed.}
Beyond the local-region requirement \eqref{required region}, the bound in \eqref{previous quadratic convergence result} also hinges on additional nontrivial local assumptions, including: (i) a nondegeneracy condition $\rho_{*,\min}>0$ and (ii) a policy-space proximity condition $\|\log\pi_k-\log\pi_*\|_\infty\le 1$.
In contrast, our local quadratic guarantee imposes no additional distributional or policy-level conditions beyond the explicit neighborhood requirement
$
\|q_* - q\|_\infty \le \frac{2}{3}\frac{1-\gamma}{\gamma}\lambda.
$
\end{enumerate}

\textbf{Comparison with Unregularized Case.}
In standard MDPs without regularization, the Bellman operator is nonsmooth due to the max operation. While PI can still be interpreted as a Newton method with stepwise linearization of the nonsmooth operator \cite{puterman1979convergence, bertsekas1997nonlinear}, it is hard to establish a local quadratic convergence guarantee without additional assumptions. Gargiani et al. \cite{gargiani2022dynamic} show that PI is locally quadratically convergent under a nondegeneracy assumption excluding spurious greedy policies. Consequently, quadratic convergence of PI has two caveats: (1) it applies only to a subset of MDPs where such degeneracies do not occur, and (2) even when a quadratic neighborhood exists, its radius is problem-dependent and can be arbitrarily small (e.g., under near-ties of multiple actions). In contrast, RPI enjoys an unconditional local quadratic convergence guarantee with an explicit neighborhood, the size of which will not vanish for any problem. We stress that this is not a strict apples-to-apples comparison: standard PI converges to the unregularized optimum, whereas RPI converges to the regularized optimum depending on $\lambda$ and $\Omega$. Nonetheless, regularization provides a robust Newton-like regime with an explicit, stable neighborhood that does not hinge on delicate nondegeneracy conditions.

\subsection{Local Fast Linear Convergence for RMPI}

In this section, we show that RMPI achieves faster linear convergence as it approaches the optimal value, with the asymptotic convergence rate being $\gamma^M$ in which $M$ denotes the number of operator steps carried out in policy evaluation. First we need the following two lemmas.

\begin{lemma}
    \label{lemma A for local convergence of MR-PI}
    Given $\Delta >0$, $\eta>0$ and $0<\gamma<1$, define
    \begin{equation}
        \label{delta for scaling}
        \delta=(1-\gamma)\left(\sqrt{1+\frac{\Delta}{\eta+2}}-1\right).
    \end{equation}
    Then we have
    \begin{equation}
        \label{lemma for scaling}
        \left(1+\frac{\delta}{1-\gamma}\right)\left(\eta+\frac{\eta+2}{1-\gamma} \delta\right)<\eta+\Delta.
    \end{equation}
\end{lemma}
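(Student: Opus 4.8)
The plan is to reparametrize so that $\gamma$ disappears entirely. I will introduce $t=\frac{\delta}{1-\gamma}$; then the definition (\ref{delta for scaling}) reads exactly $t=\sqrt{1+\frac{\Delta}{\eta+2}}-1$. Since $\Delta>0$ and $\eta+2>0$, this $t$ is strictly positive, and squaring the relation $1+t=\sqrt{1+\frac{\Delta}{\eta+2}}$ produces the single algebraic identity on which everything rests, namely $\Delta=(\eta+2)\,t(t+2)$.

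Next I would substitute into (\ref{lemma for scaling}). The left-hand side is $(1+t)\bigl(\eta+(\eta+2)t\bigr)$, whose expansion is $\eta+2(\eta+1)t+(\eta+2)t^2$. For the right-hand side, using the identity above, $\eta+\Delta=\eta+(\eta+2)t(t+2)=\eta+2(\eta+2)t+(\eta+2)t^2$. Subtracting the common terms $\eta+(\eta+2)t^2$ from both sides, the desired inequality collapses to $2(\eta+1)t<2(\eta+2)t$.

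Since $t>0$, dividing by $2t$ reduces this to $\eta+1<\eta+2$, which is trivially true, and the strictness of the conclusion is genuine precisely because $t>0$, i.e.\ because of the hypothesis $\Delta>0$. So the argument is wholly elementary; the only points needing a word of care are that $\eta+2>0$ guarantees the square root and the fraction in (\ref{delta for scaling}) are well defined, and that $\Delta>0$ is exactly what keeps the final inequality strict rather than an equality. I do not anticipate any real obstacle here: the lemma is a bookkeeping estimate whose entire content is the clean identity $\Delta=(\eta+2)t(t+2)$ combined with the obvious $\eta+1<\eta+2$, and the hardest part is merely carrying out the substitution and cancellation without error.
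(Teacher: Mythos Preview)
Your proof is correct and is essentially the same approach as the paper's: both introduce a substitution to remove $\gamma$ (the paper sets $t=1+\frac{\delta}{1-\gamma}$, you set $t=\frac{\delta}{1-\gamma}$), obtain the identity $\Delta=(\eta+2)(t^2-1)$ (equivalently your $\Delta=(\eta+2)t(t+2)$), and reduce the inequality to the trivial $\eta+2>\eta$ (equivalently your $\eta+2>\eta+1$) using $t>0$. The only difference is that the paper factors while you fully expand and cancel, which is a cosmetic distinction.
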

\begin{proof}
    Define $t=1+\frac{\delta}{1-\gamma}$ and we have
    \begin{equation}
        \begin{aligned}
            \Delta & =(\eta+2)\left(t^2-1\right) \\
            & =((\eta+2) t+\eta+2)(t-1) \\
            & >((\eta+2) t+\eta)(t-1).
        \end{aligned}
        \nonumber
    \end{equation}
    Rearranging terms, we obtain
    \begin{equation}
        \begin{aligned}
            \eta+\Delta & >((\eta+2) t+\eta) t-(\eta+2) t \\
            & =t(\eta t+2 t-2) \\
            & =\left(1+\frac{\delta}{1-\gamma}\right)\left(\eta+\frac{\delta}{1-\gamma} \eta+\frac{2 \delta}{1-\gamma}\right),
        \end{aligned}
        \nonumber
    \end{equation}
    which completes the proof.
\end{proof}

\begin{lemma}
    \label{lemma B for local convergence of MR-PI}
    Given $\delta>0$, define
    \begin{equation}
        \label{epsilon for scaling}
        \varepsilon=\frac{(1-\gamma)^3}{(1+\gamma) \gamma} \lambda\mu\delta.
    \end{equation}
    Then the following bounds hold when $\Vert q-q_*\Vert_{\infty}\leq\frac{1+\gamma}{1-\gamma} \varepsilon $:
    \begin{equation}
        \label{bound 1 for scaling}
        \left\| F_{\Omega}^{\prime}(q)-F_{\Omega}^{\prime}\left( q_* \right) \right\| _{\infty}\le \delta ,
    \end{equation}
    \begin{equation}
        \label{bound 2 for scaling}
        \left\| F_{\Omega}^{\prime}(q)^{-1}-F_{\Omega}^{\prime}\left( q_* \right) ^{-1} \right\| _{\infty}\le \delta,
    \end{equation}
    \begin{equation}
        \label{bound 3 for scaling}
        \left\| F_{\Omega}(q)-F_{\Omega}\left( q_* \right) -F_{\Omega}^{\prime}\left( q_* \right) \left( q-q_* \right) \right\| _{\infty}\le \delta \Vert q-q_*\Vert_{\infty}.
    \end{equation}
\end{lemma}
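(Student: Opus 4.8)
The plan is to derive all three estimates from a single ``master bound''. Substituting the definition (\ref{epsilon for scaling}) of $\varepsilon$ into the hypothesis $\Vert q-q_*\Vert_{\infty}\le\frac{1+\gamma}{1-\gamma}\varepsilon$ gives $\gamma\frac{N}{\mu}\sqrt{nm}\,\Vert q-q_*\Vert_{\infty}\le(1-\gamma)^2\delta\le\delta$, where the last step uses $0<\gamma<1$. Each of (\ref{bound 1 for scaling})--(\ref{bound 3 for scaling}) is then a one-line consequence of a result already established in the paper, with the factor $\gamma\frac{N}{\mu}\sqrt{nm}$ cancelling against this master bound.

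Concretely, for (\ref{bound 1 for scaling}) I would apply Proposition \ref{Lipschitz continuity of the Jacobian} with $q_1=q$ and $q_2=q_*$ to obtain $\Vert F_{\Omega}^{\prime}(q)-F_{\Omega}^{\prime}(q_*)\Vert_{\infty}\le\gamma\frac{N}{\mu}\sqrt{nm}\Vert q-q_*\Vert_{\infty}$, and then invoke the master bound. For (\ref{bound 3 for scaling}) I would apply Lemma \ref{lemma for quadratic bound} with Lipschitz constant $L=\gamma\frac{N}{\mu}\sqrt{nm}$ (furnished by Proposition \ref{Lipschitz continuity of the Jacobian}), which yields $\Vert F_{\Omega}(q)-F_{\Omega}(q_*)-F_{\Omega}^{\prime}(q_*)(q-q_*)\Vert_{\infty}\le\tfrac{1}{2}\gamma\frac{N}{\mu}\sqrt{nm}\Vert q-q_*\Vert_{\infty}\cdot\Vert q-q_*\Vert_{\infty}$; bounding the leading $\Vert q-q_*\Vert_{\infty}$ factor by the master bound leaves $\tfrac{1}{2}(1-\gamma)^2\delta\Vert q-q_*\Vert_{\infty}\le\delta\Vert q-q_*\Vert_{\infty}$. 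For (\ref{bound 2 for scaling}) I would use the resolvent identity $F_{\Omega}^{\prime}(q)^{-1}-F_{\Omega}^{\prime}(q_*)^{-1}=F_{\Omega}^{\prime}(q)^{-1}\bigl(F_{\Omega}^{\prime}(q_*)-F_{\Omega}^{\prime}(q)\bigr)F_{\Omega}^{\prime}(q_*)^{-1}$ --- legitimate since Proposition \ref{properties of the Jacobian's inverse} guarantees both inverses exist --- take the infinity operator norm, bound each outer inverse by $\frac{1}{1-\gamma}$ (again Proposition \ref{properties of the Jacobian's inverse}) and the middle difference by $(1-\gamma)^2\delta$ exactly as for (\ref{bound 1 for scaling}); the product is $\frac{1}{1-\gamma}\cdot(1-\gamma)^2\delta\cdot\frac{1}{1-\gamma}=\delta$.

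Since every step is a direct citation of Proposition \ref{properties of the Jacobian's inverse}, Proposition \ref{Lipschitz continuity of the Jacobian}, or Lemma \ref{lemma for quadratic bound}, I do not anticipate any genuine obstacle. The only delicate point is the constant bookkeeping --- checking that the precise form of $\varepsilon$ in (\ref{epsilon for scaling}) is exactly the one for which $\gamma\frac{N}{\mu}\sqrt{nm}\cdot\frac{1+\gamma}{1-\gamma}\varepsilon$ collapses to $(1-\gamma)^2\delta$, and then using $(1-\gamma)^2\le1$ to close each inequality. This lemma is, in effect, reverse-engineered so that these $\delta$-level perturbation estimates plug cleanly into the subsequent analysis of the error term $r_k$ from Theorem \ref{MR-PI as I-NR method}, giving the $\gamma^M$ asymptotic rate for regularized MPI.
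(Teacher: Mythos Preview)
Your proposal is correct and essentially identical to the paper's own proof: the paper likewise invokes Proposition~\ref{Lipschitz continuity of the Jacobian} for (\ref{bound 1 for scaling}), Lemma~\ref{lemma for quadratic bound} for (\ref{bound 3 for scaling}), and for (\ref{bound 2 for scaling}) writes out the same resolvent identity $F_{\Omega}^{\prime}(x)^{-1}\bigl(F_{\Omega}^{\prime}(x)-F_{\Omega}^{\prime}(y)\bigr)F_{\Omega}^{\prime}(y)^{-1}$ bounded via Proposition~\ref{properties of the Jacobian's inverse} and Proposition~\ref{Lipschitz continuity of the Jacobian} to get $\frac{\gamma N\sqrt{nm}}{(1-\gamma)^2\mu}\Vert x-y\Vert_{\infty}$. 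Your ``master bound'' $\gamma\frac{N}{\mu}\sqrt{nm}\Vert q-q_*\Vert_{\infty}\le(1-\gamma)^2\delta$ is exactly the constant bookkeeping the paper leaves implicit, and your arithmetic for it is right.
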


\begin{proof}
    The bound in (\ref{bound 1 for scaling}) can be identified using the Lipschitz continuity of the Jacobian (\ref{expression of Lipschitz continuity of the Jacobian}) stated in Proposition \ref{Lipschitz continuity of the Jacobian}. The bound in (\ref{bound 3 for scaling}) can be identified using Lemma \ref{lemma for quadratic bound}. Besides, we have
    \begin{equation}
        \begin{aligned}
            &\left\| F_{\Omega}^{\prime}(y)^{-1}-F_{\Omega}^{\prime}(x)^{-1} \right\| _{\infty}\\
            =&\left\| F_{\Omega}^{\prime}(x)^{-1}\left( F_{\Omega}^{\prime}(x)-F_{\Omega}^{\prime}(y) \right) F_{\Omega}^{\prime}(y)^{-1} \right\| _{\infty}\\
            \leq &\left\| F_{\Omega}^{\prime}(x)^{-1} \right\| _{\infty}\left\| F_{\Omega}^{\prime}(x)-F_{\Omega}^{\prime}(y) \right\| _{\infty}\left\| F_{\Omega}^{\prime}(y)^{-1} \right\| _{\infty}\\
            \leq &\frac{\gamma}{(1-\gamma )^2\lambda\mu}\left\| x-y\right\| _{\infty},
        \end{aligned}
        \nonumber
    \end{equation}
    which can be used to check the bound in (\ref{bound 2 for scaling}).
\end{proof}

Utilizing Lemma \ref{lemma A for local convergence of MR-PI} and Lemma \ref{lemma B for local convergence of MR-PI}, we show that RMPI achieves an asymptotic convergence rate of $\gamma^M$ in the following theorem.

\begin{theorem}[local fast linear convergence of RMPI]
    \label{local fast linear convergence of RMPI}
    Suppose $M\geq2$.
    Given $0<\Delta<\gamma-\gamma^M$, suppose $q_0$ satisfies
    \begin{equation}
        \label{local fast linear convergence region}
        \Vert q_0 - q_* \Vert_{\infty}<\varepsilon,
    \end{equation}
    in which $\varepsilon$ is defined in (\ref{epsilon for scaling}) and $\delta$ is defined in (\ref{delta for scaling}) with $\eta=\gamma^M$.
    Then the sequence $\left\{q_k \right\}$ generated by RMPI (\ref{MR-PI}) satisfies
    \begin{equation}
        \label{local fast linear convergence rate of MR-PI}
        \left\| q_{k+1}-q_* \right\| _{q_*}\le \left( \gamma ^M+\Delta \right) \left\| q_{k}-q_* \right\| _{q_*},
    \end{equation}
    in which $\Vert \cdot\Vert_{q_*}$ denotes the norm defined by
    \begin{equation}
        \nonumber
        \Vert q\Vert_{q_*}=\Vert F_{\Omega}^{\prime}\left( q_* \right)q\Vert_{\infty}.
    \end{equation}
    Furthermore, we have
    \begin{equation}
        \label{total fast linear convergence bound}
        \begin{aligned}
            &\left\| q_*-q_{k} \right\| _{\infty}\\\le& \frac{1}{1-\gamma}\frac{1+\frac{\delta}{1-\gamma}}{1-\frac{\delta}{1-\gamma}}\left( \gamma ^M+\Delta \right) ^k \\ & \cdot \min \left\{ \left\| F_{\Omega}\left(q_0\right) \right\| _{\infty},(1+\gamma )\left( 1-\frac{\delta}{1-\gamma} \right) \left\| q_0-q_* \right\| _{\infty} \right\},
            \end{aligned}
    \end{equation}
    and the asymptotic linear convergence rate of RMPI is $\gamma^M$.
\end{theorem}

\begin{proof}
    Since we have proven that RMPI is equivalent to inexact Newton method in Theorem \ref{MR-PI as I-NR method}, we use (\ref{I-NR representation of MR-PI}) as the representation of RMPI. First, we prove the norm $\Vert \cdot\Vert_{*}$ is equivalent to infinity norm $\Vert \cdot\Vert_{\infty}$. Utilizing the boundedness of the Jacobian and its inverse, we have
    \begin{equation}
        \|q\|_{q_*}=\left\|F_{\Omega}^{\prime}\left(q_*\right) q\right\|_{\infty} \leq\left\|F_{\Omega}^{\prime}\left(q_*\right)\right\|_{\infty}\|q\|_{\infty} \leq(1+\gamma)\|q\|_{\infty}
        \nonumber
    \end{equation}
    and
    \begin{equation}
        \begin{aligned}
            \|q\|_{\infty} & =\left\|F_{\Omega}^{\prime}\left(q_*\right)^{-1} F_{\Omega}^{\prime}\left(q_*\right) q\right\|_{\infty} \leq\left\|F_{\Omega}^{\prime}\left(q_*\right)^{-1}\right\|_{\infty}\|q\|_{q_*} \\
            & \leq \frac{1}{1-\gamma}\|q\|_{q_*} ,
        \end{aligned}
        \nonumber
    \end{equation}
    which induces
    \begin{equation}
        \label{equivalent norm relationship}
        (1-\gamma)\|q\|_{\infty} \leq\|q\|_{q_*} \leq(1+\gamma)\left\|q\right\|_{\infty}.
    \end{equation}

    Next, we prove the following argument holds for $n\geq1$ using mathematical induction:
    \begin{equation}
        \label{argument for mathematical induction in MR-PI}
        \begin{aligned}
            &\left\| q_n-q_* \right\| _{\infty}\le \frac{1+\gamma}{1-\gamma} \varepsilon,\\
            &\left\| q_n-q_* \right\| _{q_*}\le \left( \gamma ^M+\Delta \right) \left\| q_{n-1}-q_* \right\| _{q_*} .
        \end{aligned}
    \end{equation}
    Suppose (\ref{argument for mathematical induction in MR-PI}) holds for $n=1,2,\cdots,k$. We need to show that it also holds for $n=k+1$.
    Recall that the error term $r_k$ is given in (\ref{I-NR representation of MR-PI}) and (\ref{error term of MR-PI}). $\left\|q_{k+1}-q_*\right\|_*$ is bounded by
    \begin{equation}
        \begin{aligned}
            & \left\|q_{k+1}-q_*\right\|_{q_*} \\
            = & \left\|F_{\Omega}^{\prime}\left(q_*\right)\left(q_{k+1}-q_*\right)\right\|_{\infty} \\
            = & \left\|F_{\Omega}^{\prime}\left(q_*\right) F_{\Omega}^{\prime}\left(q_k\right)^{-1} F_{\Omega}^{\prime}\left(q_k\right)\left(q_{k+1}-q_*\right)\right\|_{\infty} \\
            \leq & \left\|A\right\|_{\infty} \left\|B\right\|_{\infty},
        \end{aligned}
        \nonumber
    \end{equation}
    in which $A$ is given by
    \begin{equation}
        \begin{aligned}
            A & = F_{\Omega}^{\prime}\left(q_*\right) F_{\Omega}^{\prime}\left(q_k\right)^{-1}\\
            &=F_{\Omega}^{\prime}\left(q_*\right) F_{\Omega}^{\prime}\left(q_k\right)^{-1}+I-F_{\Omega}^{\prime}\left(q_*\right) F_{\Omega}^{\prime}\left(q_*\right)^{-1} \\
            & =I+F_{\Omega}^{\prime}\left(q_*\right)\left(F_{\Omega}^{\prime}\left(q_k\right)^{-1}-F_{\Omega}^{\prime}\left(q_*\right)^{-1}\right)
        \end{aligned}
        \nonumber
    \end{equation}
    and $B$ is given by
    \begin{equation}
        \begin{aligned}
            B=&F_{\Omega}^{\prime}\left( q_k \right) \left( q_{k+1}-q_* \right)\\
            =&r_k+\left( F_{\Omega}^{\prime}\left( q_k \right) -F_{\Omega}^{\prime}\left( q_* \right) \right) \left( q_k-q_* \right)\\
            &-\left( F_{\Omega}\left( q_k \right) -F_{\Omega}\left( q_* \right) -F_{\Omega}^{\prime}\left( q_* \right) \left( q_k-q_* \right) \right).
        \end{aligned}
        \nonumber
    \end{equation}
    Utilizing Lemma \ref{lemma B for local convergence of MR-PI}, $\|A\|_{\infty}$ and $\|B\|_{\infty}$ can be bounded as
    \begin{equation}
        \begin{aligned}
            \|A\|_{\infty} & \leq 1+\left\|F_{\Omega}^{\prime}\left(q_*\right)\right\|_{\infty}\left\|F_{\Omega}^{\prime}\left(q_k\right)^{-1}-F_{\Omega}^{\prime}\left(q_*\right)^{-1}\right\|_{\infty} \\
            & \leq 1+(1+\gamma) \delta
        \end{aligned}
        \nonumber
    \end{equation}
    and
    \begin{equation}
        \begin{aligned}
            \|B\|_{\infty} \leq & \left\|r_k\right\|_{\infty}+\left\|F_{\Omega}^{\prime}\left(q_k\right)-F_{\Omega}^{\prime}\left(q_*\right)\right\|_{\infty}\left\|q_k-q_*\right\|_{\infty} \\
            & +\left\|F_{\Omega}\left(q_k\right)-F_{\Omega}\left(q_*\right)-F_{\Omega}^{\prime}\left(q_*\right)\left(q_k-q_*\right)\right\|_{\infty} \\
            \leq & \gamma^M\left\|F_{\Omega}\left(q_k\right)\right\|_{\infty}+2 \delta\left\|q_k-q_*\right\|_{\infty}.
        \end{aligned}
        \nonumber
    \end{equation}
    $\left\| F_{\Omega}\left( q_k \right) \right\| _{\infty}$ is bounded by
    \begin{equation}
        \begin{aligned}
            &\left\| F_{\Omega}\left( q_k \right) \right\| _{\infty}\\
            =&\Vert F_{\Omega}^{\prime}\left( q_* \right) \left( q_k-q_* \right) \\&+\left( F_{\Omega}\left( q_k \right) -F_{\Omega}\left( q_* \right) -F_{\Omega}^{\prime}\left( q_* \right) \left( q_k-q_* \right) \right) \Vert _{\infty}\\
            \leq &\left\| q_k-q_* \right\| _{q_*}+\left\| F_{\Omega}\left( q_k \right) -F_{\Omega}\left( q_* \right) -F_{\Omega}^{\prime}\left( q_* \right) \left( q_k-q_* \right) \right\| _{\infty}\\
            \leq &\left\| q_k-q_* \right\| _{q_*}+\delta \left\| q_k-q_* \right\| _{\infty}.
        \end{aligned}
        \nonumber
    \end{equation}
    Putting above bounds together and using Lemma \ref{lemma A for local convergence of MR-PI}, we obtain
    \begin{equation}
        \begin{aligned}
            &\left\|q_{k+1}-q_*\right\|_{q_*} \leq\|A\|_{\infty}\|B\|_{\infty} \\
            \leq&  (1+(1+\gamma) \delta)\left(\gamma^M+\left(\gamma^M+2\right) \frac{\delta}{1-\gamma}\right)\left\|q_k-q_*\right\|_{q_*} \\
            <&  (1+\frac{\delta}{1-\gamma})\left(\gamma^M+\left(\gamma^M+2\right) \frac{\delta}{1-\gamma}\right)\left\|q_k-q_*\right\|_{q_*} \\
            <& (\gamma^M + \Delta) \left\|q_k-q_*\right\|_{q_*} .
        \end{aligned}
        \nonumber
    \end{equation}
    We also have
    \begin{equation}
        \begin{aligned}
            \left\|q_{k+1}-q_*\right\|_{\infty} & \leq \frac{1}{1-\gamma}\left\|q_{k+1}-q_*\right\|_{q_*} \\
            & \leq \frac{1}{1-\gamma}\left(\gamma^M+\Delta\right)^{k+1}\left\|q_0-q_*\right\|_{q_*} \\
            & \leq \frac{1+\gamma}{1-\gamma}\left(\gamma^M+\Delta\right)^{k+1}\left\|q_0-q_*\right\|_{\infty} \\
            & \leq \frac{1+\gamma}{1-\gamma} \varepsilon ,
        \end{aligned}
        \nonumber
    \end{equation}
    which completes the proof of (\ref{argument for mathematical induction in MR-PI}).

    To prove (\ref{total fast linear convergence bound}), we need to identify the relationship between $\Vert F_{\Omega}(q)\Vert_{\infty}$ and $\Vert q-q_*\Vert_{q_*}$.
    $\left\|F_{\Omega}(q)\right\|_{\infty}$ is bounded by
    \begin{equation}
        \begin{aligned}
            &\left\|F_{\Omega}(q)\right\|_{\infty} \\=& \left\|F_{\Omega}^{\prime}\left(q_*\right)\left(q-q_*\right)+F_{\Omega}(q)-F_{\Omega}\left(q_*\right)-F_{\Omega}^{\prime}\left(q_*\right)\left(q-q_*\right)\right\|_{\infty} \\
            \leq& \left\|F_{\Omega}^{\prime}\left(q_*\right)\left(q-q_*\right)\right\|_{\infty}\\&+\left\|F_{\Omega}(q)-F_{\Omega}\left(q_*\right)-F_{\Omega}^{\prime}\left(q_*\right)\left(q-q_*\right)\right\|_{\infty} \\
            \leq & \left\|q-q_*\right\|_{q_*}+\delta\left\|q-q_*\right\|_{\infty} \\
            \leq& \left\|q-q_*\right\|_{q_*}+\frac{\delta}{1-\gamma}\left\|q-q_*\right\|_{q_*} \\
            =& \left(1+\frac{\delta}{1-\gamma}\right)\left\|q-q_*\right\|_{q_*}
        \end{aligned}
        \nonumber
    \end{equation}
    and
    \begin{equation}
        \begin{aligned}
            \left\|F_{\Omega}(q)\right\|_{\infty} & \geq\left\|q-q_*\right\|_{q_*}-\delta\left\|q-q_*\right\|_{\infty} \\
            & \geq\left\|q-q_*\right\|_{q_*}-\frac{\delta}{1-\gamma}\left\|q-q_*\right\|_{q_*} \\
            & =\left(1-\frac{\delta}{1-\gamma}\right)\left\|q-q_*\right\|_{q_*} .
        \end{aligned}
        \nonumber
    \end{equation}
    Combining the above two bounds with (\ref{argument for mathematical induction in MR-PI}), we have
    \begin{equation}
        \left\| F_{\Omega}\left( q_k \right) \right\| _{\infty}\leq \frac{1+\frac{\delta}{1-\gamma}}{1-\frac{\delta}{1-\gamma}}\left( \gamma ^M+\Delta \right) ^k\left\| F_{\Omega}\left( q_0 \right) \right\| _{\infty}.
        \nonumber
    \end{equation}
    Utilizing the relationship (\ref{equivalent norm relationship}), we also have
    \begin{equation}
        \begin{aligned}
            &\left\| F_{\Omega}\left( q_k \right) \right\| _{\infty}\leq \left( 1+\frac{\delta}{1-\gamma} \right) \left\| q_k-q_* \right\| _{q_*}\\
            \leq& \left( 1+\frac{\delta}{1-\gamma} \right) \left( \gamma ^M+\Delta \right) ^k\left\| q_0-q_* \right\| _{q_*}\\
            \leq& (1+\gamma )\left( 1+\frac{\delta}{1-\gamma} \right) \left( \gamma ^M+\Delta \right) ^k\left\| q_0-q_* \right\| _{\infty} .
        \end{aligned}
        \nonumber
    \end{equation}
    Now we can bound the error $\left\| q_*-q_{k} \right\| _{\infty}$. We have
    \begin{equation}
        \label{final bound part 2}
        \begin{aligned}
            \left\| q_*-q_k \right\| _{\infty}&\leq \frac{1}{1-\gamma}\left\| F_{\Omega}\left( q_k \right) \right\| _{\infty}\\&\leq \frac{1}{1-\gamma}\frac{1+\frac{\delta}{1-\gamma}}{1-\frac{\delta}{1-\gamma}}\left( \gamma ^M+\Delta \right) ^k\left\| F_{\Omega}\left( q_0 \right) \right\| _{\infty}
        \end{aligned}
    \end{equation}
    and
    \begin{equation}
        \label{final bound part 3}
        \begin{aligned}
            \left\| q_*-q_k \right\| _{\infty}&\leq \frac{1}{1-\gamma}\left\| F_{\Omega}\left( q_k \right) \right\| _{\infty}\\&\leq \frac{1+\gamma}{1-\gamma}\left( 1+\frac{\delta}{1-\gamma} \right) \left( \gamma ^M+\Delta \right) ^k\left\| q_0-q_* \right\| _{\infty}.
        \end{aligned}
    \end{equation}
    Combining (\ref{final bound part 2}) and (\ref{final bound part 3}), the proof of (\ref{total fast linear convergence bound}) is completed.

    Forcing $\Delta$ goes to zero requires that $q$ approaches the optimal value $q_*$, i.e., for $q_0\rightarrow q_*$,
    \begin{equation}
        \label{asymptotic linear convergence bound}
        \begin{aligned}
            &\left\| q_*-q_{k} \right\| _{\infty}\\\le& \frac{1}{1-\gamma}\frac{1+\frac{\delta}{1-\gamma}}{1-\frac{\delta}{1-\gamma}}\left( \gamma ^M \right) ^k \\ & \cdot \min \left\{ \left\| F_{\Omega}\left(q_0\right) \right\| _{\infty},(1+\gamma )\left( 1-\frac{\delta}{1-\gamma} \right) \left\| q_0-q_* \right\| _{\infty} \right\}
            \end{aligned}
    \end{equation}
    holds.
    So we conclude that the asymptotic linear convergence rate for RMPI is $\gamma^M$.
\end{proof}

\textbf{Comparison with Prior Art.} To the best of our knowledge, we are the first to establish that RMPI enjoys an asymptotic linear convergence rate of $\gamma^M$. Prior work \cite{geist2019theory} demonstrated a linear convergence rate of $\gamma$, a bound that notably leaves the influence of the step number $M$ unaccounted for. Our result theoretically quantifies the benefit of performing more steps (larger $M$) in regularized policy evaluation.

\begin{remark}
In this work, we focus on dynamic-programming algorithms (RPI and RMPI) for RMDPs with a fixed parameter $\lambda$. We do not consider schemes that anneal $\lambda$ across iterations to approach the unregularized optimum.

Technically, we view the regularized Bellman operator as arising from a smoothing of the $\max$ operator, which enables a Newton-type analysis. This view is strictly an analysis tool; it does not imply the RMDP is merely a smooth approximation of the original MDP. For a fixed $\lambda$, the objective is well-defined and encodes explicit preferences, such as exploration or robustness.
Indeed, in many modern RL pipelines, such as the deep RL library Tianshou \cite{weng2022tianshou}, $\lambda$ is treated as a hyperparameter that is tuned and held fixed rather than annealed. Furthermore, regularization can be viewed as a form of robustification, where the regularized optimum coincides with the solution to an adversarially perturbed control problem in a suitable dual formulation~\cite{husain2021regularized}.
Consequently, convergence results for fixed-$\lambda$ problems are practically and theoretically well-motivated.
\end{remark}

\section{Algorithm Design}

In this section, we leverage the NR perspective of RPI to design an accelerated algorithm with local third-order convergence. This is inspired by higher-order Newton-type methods from numerical analysis \cite{shamanskii1967modification, petkovic2014multipoint}, which attain faster local convergence by reusing Jacobian information from the current iterate in additional correction steps.

We call the proposed method third-order regularized policy iteration (T-RPI). The update rule is formulated as
\begin{equation}
    \label{T-RPI equation}
    \left\{ \begin{array}{l}
        \bar{q}_{k+1}=q_k-F_{\Omega}^{\prime}\left( q_k \right) ^{-1}F_{\Omega}\left( q_k \right)\\
        q_{k+1}=\bar{q}_{k+1}-F_{\Omega}^{\prime}\left( q_k \right) ^{-1}F_{\Omega}\left( \bar{q}_{k+1} \right)\\
    \end{array} \right. ,
\end{equation}
where $F_{\Omega}$ is defined in \eqref{smoothed Bellman equation}, and $F_{\Omega}^{\prime}$ is defined in \eqref{Jacobian of smoothed Bellman equation}. The first step in \eqref{T-RPI equation} is identical to standard RPI, representing a standard Newton update. The second step introduces an additional refinement, explicitly reusing the Jacobian evaluated at $q_k$ to further improve the action value. We show next that T-RPI achieves local third-order convergence.

\begin{theorem}[local cubic convergence of T-RPI]
    \label{local cubic convergence of T-RPI}
    The T-RPI algorithm described in \eqref{T-RPI equation} achieves local cubic convergence in the region
    \begin{equation}
        \label{cubic convergence region}
        \left\| q_*-q \right\| _{\infty} \le \frac{2}{3}\frac{1-\gamma}{\gamma}\lambda\mu.
    \end{equation}
    The iteration error satisfies
    \begin{equation}
        \label{cubic convergence between iterations}
        \left\| q_*-q_{k+1} \right\| _{\infty}\le \left(\frac{3}{2}\frac{\gamma}{1-\gamma}\frac{1}{\lambda\mu}\right)^2\left\| q_*-q_k \right\| _{\infty}^{3}.
    \end{equation}
\end{theorem}

\begin{proof}
    By Theorem \ref{local quadratic convergence of regularized PI}, we have
    \begin{equation}
        \label{quadratic of first step}
        \left\| \bar{q}_{k+1}-q_* \right\| _{\infty}\le \frac{3}{2}\frac{\gamma}{1-\gamma}\frac{1}{\lambda\mu}\left\| q_*-q_k \right\| _{\infty}^{2}.
    \end{equation}
    From $q_{k+1}=\bar{q}_{k+1}-F_{\Omega}^{\prime}\left( q_k \right) ^{-1}F_{\Omega}\left( \bar{q}_{k+1} \right)$, we have
    \begin{equation}
        \nonumber
        q_{k+1}-q_*=\bar{q}_{k+1}-q_*-F_{\Omega}^{\prime}\left( q_k \right) ^{-1}F_{\Omega}\left( \bar{q}_{k+1} \right).
    \end{equation}
    Furthermore,
    \begin{equation}
        \label{error relationship of T-RPI}
        \begin{aligned}
            q_{k+1}-q_*=&\bar{q}_{k+1}-q_*- F_{\Omega}^{\prime}\left( q_k \right) ^{-1}F_{\Omega}^{\prime}\left( q_* \right)  \left( \bar{q}_{k+1}-q_* \right) \\&-F_{\Omega}^{\prime}\left( q_k \right) ^{-1} R_*\left( \bar{q}_{k+1} \right),
        \end{aligned}
    \end{equation}
    where $R_*\left( \bar{q}_{k+1} \right)$ denotes the remainder term
    \begin{equation}
        \nonumber
        R_*\left( \bar{q}_{k+1} \right) =F_{\Omega}\left( \bar{q}_{k+1} \right) -F_{\Omega}\left( q_* \right) -F_{\Omega}^{\prime}\left( q_* \right)  \left( \bar{q}_{k+1}-q_* \right).
    \end{equation}
    From Proposition \ref{properties of the Jacobian's inverse} and \ref{Lipschitz continuity of the Jacobian}, we have $\left\| F_{\Omega}^{\prime}\left( q_k \right) ^{-1} \right\| _{\infty}\le \frac{1}{1-\gamma}$ and $\left\| F_{\Omega}^{\prime}\left( q_k \right) -F_{\Omega}^{\prime}\left( q_* \right) \right\| _{\infty}\le \frac{\gamma}{\lambda\mu}\left\| q_k-q_* \right\| _{\infty}$. From Lemma \ref{lemma for quadratic bound}, we have $\left\| R_*\left( \bar{q}_{k+1} \right) \right\| _{\infty}\le \frac{\gamma}{2\lambda\mu}\left\| \bar{q}_{k+1}-q_* \right\| _{\infty}^{2}$.
    Taking $\left\| \cdot \right\| _{\infty}$ on both sides of (\ref{error relationship of T-RPI}) and plugging in these bounds,
    \begin{equation}
        \label{second-last step}
        \begin{aligned}
            \left\| q_{k+1}-q_* \right\| _{\infty}\le&  \frac{\gamma}{1-\gamma}\frac{1}{\lambda\mu}\left\|q_k -q_*\right\|_{\infty} \left\| \bar{q}_{k+1} -q_* \right\| _{\infty}\\&+ \frac{\gamma}{1-\gamma}\frac{1}{2\lambda\mu}\left\| \bar{q}_{k+1}-q_* \right\| _{\infty}^{2}.
        \end{aligned}
    \end{equation}
    Plugging \eqref{quadratic of first step} and \eqref{cubic convergence region} into \eqref{second-last step} yields \eqref{cubic convergence between iterations}.
\end{proof}

Notably, the local cubic convergence region of T-RPI is also dimension-free, provided the regularizer has a dimension-independent strong convexity constant $\mu$ with respect to $\Vert \cdot \Vert_{1}$.

Compared with standard RPI, T-RPI attains faster local convergence at the cost of extra computation per iteration. In other words, T-RPI typically requires fewer iterations than RPI to reach a given accuracy, but each iteration is nominally more expensive. However, in \eqref{T-RPI equation} the two linear systems share the same coefficient matrix: both are of the form $(I-\gamma P \nabla f_{\Omega}(q_k))x=b$ with $x,b\in \mathbb{R}^{mn}$ (see \eqref{Jacobian of smoothed Bellman equation}). For large state-action spaces, these linear systems are typically solved via matrix factorization. Consequently, one can compute the lower-upper (LU) decomposition of the shared coefficient matrix once and reuse it to solve the second system efficiently. With this reuse, the additional cost of T-RPI becomes minor, and the per-iteration costs of T-RPI and RPI are comparable. As a result, T-RPI can improve wall-clock time by converging in fewer iterations. We validate this advantage empirically in the next section.

\section{Numerical Experiments}

In this section, we conduct numerical experiments to support the theoretical results. First, we verify the local quadratic convergence of RPI, the asymptotic $\gamma^M$ linear convergence of RMPI, and the local cubic convergence of T-RPI. Second, we compare the wall-clock time of T-RPI and RPI to demonstrate the practical advantage of T-RPI.

\subsection{Verification of Convergence Rates}

The regularizer is chosen as Shannon entropy (see Remark \ref{examples of regularization-based smoothed max operators}), so we have $\mu=1$. We set $\lambda=0.5$, $\gamma=0.95$, $|\mathcal{S}|=100$ and $|\mathcal{A}|=20$. For RMPI, we set $M=10$. The Markov transition kernel $P$ is generated at random. The reward $r(s,a)$ is sampled uniformly from $[-1,1]$.
The iteration error is measured as $e_k=\left\| q_*-q_{k} \right\| _{\infty}$. To obtain $q_*$, we run RPI for sufficient iterations until it reaches very high precision. We use Multiprecision Computing Toolbox for MATLAB in our numerical experiments.

For RPI, we denote the coefficient of quadratic convergence in Theorem \ref{local quadratic convergence of regularized PI} as $C = \frac{3}{2}\frac{\gamma}{1-\gamma}\frac{1}{\lambda\mu}$ for simplicity.
As shown in (\ref{total quadratic convergence bound}), $-\ln(\ln(\frac{1}{C e_k}))$ is linear with respect to the iteration number $k$, with the slope being $-\ln(2)$.
We initialize $q_0 \in \mathbb{R}^{|\mathcal{S}||\mathcal{A}|}$ in the local quadratic convergence region presented in Theorem \ref{local quadratic convergence of regularized PI}.
We plot $-\ln(\ln(\frac{1}{C e_k}))$ as a function of the number of iterations $k$.
The results are shown in Fig.~\ref{fig quadratic}. The dashed line has the slope $-\ln(2)$ and passes through the last data point, which represents the theoretical quadratic convergence. We can see that the numerical experiment validates our theoretical result.

\begin{figure}[htbp]
    \centering
    \includegraphics[width=3.0in]{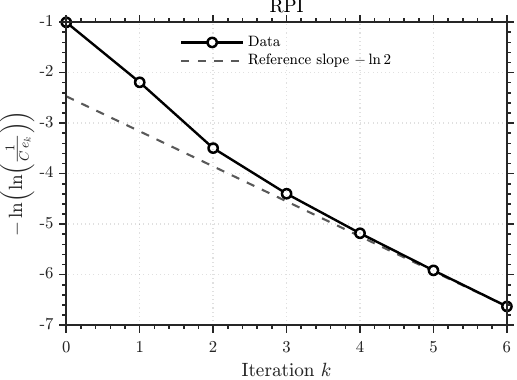}
    \caption{Numerical experiment for RPI: $-\ln(\ln(\frac{1}{C e_k}))$ as a function of the iteration number $k$.}
    \label{fig quadratic}
\end{figure}

For RMPI, as shown in (\ref{asymptotic linear convergence bound}), $\ln(e_k)$ is linear with respect to the iteration number $k$, with the slope being $\ln(\gamma^M)$.
We choose $\Delta=0.02$ and initialize $q_0 \in \mathbb{R}^{|\mathcal{S}||\mathcal{A}|}$ in the local fast linear convergence region presented in Theorem \ref{local fast linear convergence of RMPI}.
We plot $\ln(e_k)$ as a function of the number of iterations $k$. The results are shown in Fig.~\ref{fig linear}. The dashed line has the slope $\ln(\gamma^M)$ and passes through the last data point, which represents the theoretical $\gamma^M$ asymptotic linear convergence. The numerical experiment validates our theoretical result. Note that in this case, we have $\gamma^M=0.95^{10}\approx 0.6$. The convergence speed of RMPI would have been underestimated if we only have the previous $\gamma$ linear convergence result \cite{geist2019theory}.

\begin{figure}[htbp]
    \centering
    \includegraphics[width=3.0in]{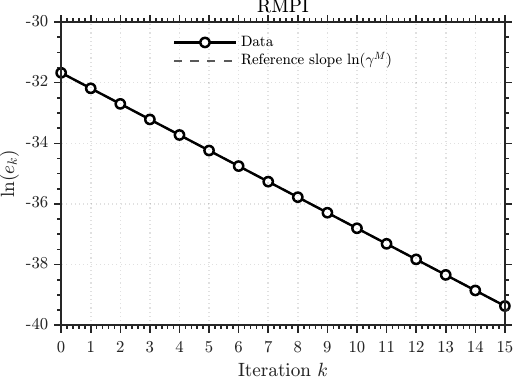}
    \caption{Numerical experiment for RMPI: $\ln(e_k)$ as a function of the iteration number $k$.}
    \label{fig linear}
\end{figure}

For T-RPI, as shown in (\ref{cubic convergence between iterations}), $-\ln(\ln(\frac{1}{C e_k}))$ is linear with respect to the iteration number $k$, with the slope being $-\ln(3)$. $C =\frac{3}{2}\frac{\gamma}{1-\gamma}\frac{1}{\lambda\mu}$.
We initialize $q_0 \in \mathbb{R}^{|\mathcal{S}||\mathcal{A}|}$ in the local cubic convergence region presented in Theorem \ref{local cubic convergence of T-RPI}.
We plot $-\ln(\ln(\frac{1}{C e_k}))$ as a function of the number of iterations $k$.
The results are shown in Fig.~\ref{fig cubic}. The dashed line has the slope $-\ln(3)$ and passes through the last data point, which represents the theoretical cubic convergence. The numerical experiment validates our theoretical result.

\begin{figure}[htbp]
    \centering
    \includegraphics[width=3.0in]{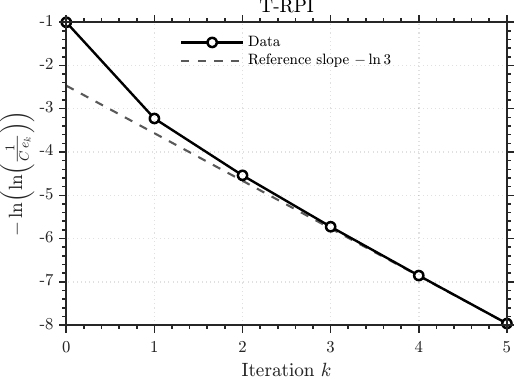}
    \caption{Numerical experiment for T-RPI: $-\ln(\ln(\frac{1}{C e_k}))$ as a function of the iteration number $k$.}
    \label{fig cubic}
\end{figure}

\subsection{Practical Advantage of T-RPI}

We compare the wall-clock time of T-RPI and RPI when they are initialized at the same point and run until the iteration error $e_k$ is smaller than a prescribed threshold. For the linear systems in RPI and T-RPI, since the state-action space is large, we solve them via LU decomposition. For T-RPI, we compute the LU decomposition of the shared coefficient matrix once and reuse it to solve both linear systems in each iteration.
We randomly generate 20 instances of the MDP and report the average wall-clock time for T-RPI and RPI to reach the accuracy threshold. The regularizer, $\lambda$, $\gamma$, $|\mathcal{S}|$, $|\mathcal{A}|$, the reward and MDP generation process are the same as in the previous subsection.
We initialize $q_0 \in \mathbb{R}^{|\mathcal{S}||\mathcal{A}|}$ to the all-zeros vector, and set the threshold for $e_k$ to $10^{-40}$. The results are shown in Fig.~\ref{fig time}. T-RPI achieves an average speedup of $1.3\times$ over RPI, demonstrating the practical advantage of T-RPI in terms of computational efficiency.

\begin{figure}[htbp]
    \centering
    \includegraphics[width=3.0in]{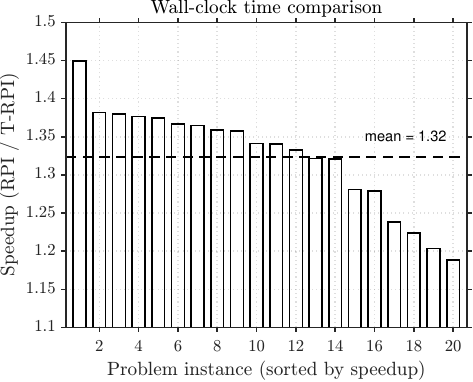}
    \caption{Wall-clock time comparison between T-RPI and RPI. The bar height shows the speedup of T-RPI over RPI, computed as the ratio of the average wall-clock time of RPI to that of T-RPI. Each bar corresponds to a randomly generated MDP.}
    \label{fig time}
\end{figure}

\section{Conclusion}

In this paper, we established that regularized policy iteration is equivalent to the Newton--Raphson method applied to the Bellman equation smoothed by strongly convex regularization. This equivalence not only facilitates a unified convergence analysis of existing methods but also lays the foundation for designing accelerated algorithms. Specifically, we proved that regularized policy iteration converges quadratically in a local neighborhood of the optimal value. We further characterized regularized modified policy iteration as an inexact Newton method with truncated steps, establishing an asymptotic linear convergence rate of $\gamma^M$, where $M$ denotes the number of operator steps carried out in policy evaluation. Furthermore, we proposed a novel algorithm that achieves local third-order convergence and demonstrated its superiority over standard regularized policy iteration in numerical experiments. Collectively, these results advance the theoretical understanding of regularization in reinforcement learning and open up new avenues for algorithmic innovation.

\bibliographystyle{IEEEtran}
\bibliography{reference}

\end{document}